\DeclareMathOperator*{\argmax}{arg\,max} 
\setlist{leftmargin=*}
\newtheorem{theorem}{Theorem}[section]
\newtheorem{lemma}[theorem]{Lemma}
\newtheorem{corollary}[theorem]{Corollary}
\newtheorem{myconjecture}{Conjecture}
\newtheorem{definition}[theorem]{Definition}
\newcommand\ee{\mathbf{e}}
\newcommand{\ind}{\mathbf{1}}
\newcommand{\cD}{\mathcal{D}}
\newcommand{\PP}{\mathbb{P}}
\newcommand{\EE}{\mathbb{E}}
\newcommand{\RR}{\mathbb{R}}
\newcommand{\Regret}{\mathrm{Regret}}
\newcommand{\XX}{G}
\newcommand{\xx}{g}
\newcommand\lh{\hat{\xx}}
\newcommand\Lh{\hat{\XX}}
\newcommand\Lt{\tilde{\XX}}
\newcommand{\f}{\Phi}
\newcommand{\tf}{{\tilde{\f}}}
\newcommand{\gtf}{\nabla {\tilde{\f}}}
\newcommand{\D}{\Delta}
\newcommand\inner[1]{\left\langle #1 \right\rangle}
\newenvironment{framework}[1][htb]
  {% Update algorithm name
   \begin{algorithm}[#1]%
  }{\end{algorithm}}
\title{\textbf{Beyond the Hazard Rate: More Perturbation Algorithms for Adversarial Multi-armed Bandits}}
\author{
Zifan Li \\
University of Michigan\\
\texttt{zifanli@umich.edu} \\
\and
Ambuj Tewari \\
University of Michigan\\
\texttt{tewaria@umich.edu} \\
}
\begin{document}
\maketitle

%!TEX root=bandit_beyond_hazard.tex

\begin{abstract}
Recent work on follow the perturbed leader (FTPL) algorithms for the adversarial multi-armed bandit problem
has highlighted the role of the hazard rate of the distribution generating the perturbations. Assuming that
the hazard rate is bounded, it is possible to provide regret analyses for a variety of FTPL algorithms for the
multi-armed bandit problem. This paper pushes the inquiry into regret bounds for FTPL algorithms beyond the
bounded hazard rate condition. There are good reasons to do so: natural distributions such as the uniform and Gaussian
violate the condition. We give regret bounds for both bounded support and unbounded support distributions without
assuming the hazard rate condition. We also disprove a conjecture that the Gaussian distribution cannot lead to a low-regret
algorithm. In fact, it turns out that it leads to near optimal regret, up to logarithmic factors. A key ingredient in our approach
is the introduction of a new notion called the generalized hazard rate. \\
\end{abstract}
Keywords: online learning, regret, multi-armed bandits, follow the perturbed leader, gradient based algorithms\\

%!TEX root=bandit_beyond_hazard.tex

\section{Introduction}

Starting from the seminal work of \cite{Hannan57} and later developments due to \cite{KV-FTL}, perturbation based algorithms (called ``Follow the Perturbed Leader (FTPL)") have occupied a central place in online learning.
Another major family of online learning algorithms, called ``Follow the Regularized Leader (FTRL)", is based on the idea of regularization. In special cases, such as the exponential
weights algorithm for the experts problem, it has been folk knowledge that regularization and perturbation ideas are connected.
That is, the exponential weights algorithm can be understood as either using negative entropy regularization or Gumbel distributed perturbations (for example, see the discussion in \cite{abernethy2014online}).
 
Recent work have begun to further uncover the connections between perturbation and regularization. For example, in online linear optimization,
one can understand regularization and perturbation as simply two different ways to smooth a non-smooth potential function.
The former corresponds to infimal convolution smoothing and the latter corresponds to stochastic (or integral convolution) smoothing \citep{abernethy2014online}. Having a generic framework for understanding
perturbations allows one to study a wide variety of online linear optimization games and a number of interesting perturbations.

FTRL and FTPL algorithms have also been used beyond ``full information" settings. ``Full information" refers to the fact that the
learner observes the entire move of the adversary. The multi-armed bandit problem is one of the most fundamental examples of ``partial information" settings.
Regret analysis of the multi-armed bandit problem goes back
to the work of \cite{Robbins52} who formulated the stochastic version of the problem. The non-stochastic, or adversarial, version was formulated by \cite{Auer2002},
who provided the EXP3 algorithm achieving $O(\sqrt{N T \log N})$ regret in $T$ rounds with $N$ arms. They also showed a lower bound of $\Omega(\sqrt{NT})$, 
which was later matched by the Poly-INF algorithm \citep{audibert2009minimax,audibert2011minimax}. The Poly-INF algorithm can be interpreted as an FTRL
algorithm with negative Tsallis entropy regularization \citep{audibert2011minimax,Abernethy2015}.
For a recent survey of both stochastic and non-stochastic bandit problems, see \cite{bubeck2012regret}.

For the non-stochastic multi-armed bandit problem, \cite{kujala2005following} and \cite{Poland2005} both showed that using
the exponential (actually double exponential/Laplace) distribution
in an FTPL algorithm coupled with standard unbiased estimation technique yields near-optimal $O(\sqrt{NT\log N})$ regret.
Unbiased estimation needs access to arm probabilities that are not explicitly available when using an FTPL algorithm.
\cite{neu2013efficient} introduced the geometric resampling scheme to approximate these probabilities while still guaranteeing low regret.
Recently, \cite{Abernethy2015} analyzed FTPL for adversarial multi-armed
bandits and provided regret bounds under the condition that the hazard rate of the perturbation distribution is bounded. This condition
allowed them to consider a variety of perturbation distributions beyond the exponential, such as Gamma, Gumbel, Frechet, Pareto, and Weibull.

Unfortunately, the bounded hazard rate condition is violated by two of the most widely known distributions: namely the uniform\footnote{The uniform distribution
is also historically significant as it was used in the original FTPL algorithm of \cite{Hannan57}.}  and the Gaussian
distributions. Therefore, the results of \cite{Abernethy2015} say nothing about the regret incurred in an adversarial multi-armed bandit problem when we use 
these distributions (without forced exploration) to generate perturbations. Contrast this to the full information experts setting where using these distributions as perturbations yields
optimal $\sqrt{T}$ regret and even yields the optimal $\sqrt{\log N}$ dependence on the dimension in the Gaussian case \citep{abernethy2014online}.

The Gaussian distribution has lighter tails than the exponential. The hazard rate of a Gaussian increases linearly on the real line (and is hence unbounded)
whereas the exponential has a constant hazard rate. Does having too light a tail make a perturbation inherently bad? The uniform is even worse from a light tail point of view: it has
bounded support! In fact, \cite{kujala2005following} had trouble dealing with the uniform distribution and remarked, ``we failed to analyze
the expert setting when the perturbation distribution was uniform."
Does having a bounded support make a perturbation even worse? Or is it that the hazard rate condition is just a sufficient condition without being anywhere close
to necessary for a good regret bound to exist. The analysis of \cite{Abernethy2015} suggests that perhaps a bounded hazard rate is critical. They even made the following conjecture.

\begin{myconjecture}
\label{conj:hazard}
If a distribution $\cD$ has a monotonically increasing hazard rate $h_{\cD}(x)$ that does
not converge as $x \to +\infty$ (e.g., Gaussian), then there is a sequence of gains that causes the corresponding FTPL algorithm to incur at least a linear regret.
\end{myconjecture}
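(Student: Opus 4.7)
Since the abstract advertises that this conjecture is false (the Gaussian in fact leads to near-optimal regret), my plan is to refute it rather than prove it. The natural strategy is to produce a distribution $\cD$ satisfying the hypothesis --- monotonically increasing, diverging hazard rate --- yet whose FTPL algorithm attains sublinear regret on every sequence of gains. The Gaussian is the obvious candidate, both because the conjecture names it explicitly and because Gaussian perturbations already yield optimal regret in the full-information experts setting, so refuting the conjecture on this single example would already be conclusive.

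The approach I would take is the standard FTPL regret decomposition: write the expected regret as a ``stability'' term, measuring how frequently the argmax of the perturbed cumulative rewards jumps between arms under a single reward update, plus an ``overestimation'' penalty of order $\EE[\max_i \eta Z_i] \asymp \eta\sqrt{\log N}$ that comes from the size of the injected noise. For Gaussian perturbations at scale $\eta$, even though the hazard rate $h(x)$ grows linearly, the density $\phi(x)$ decays like $e^{-x^2/2}$, so the probability that a single importance-weighted update moves the argmax is already small in the region where the argmax actually sits. Balancing these two contributions --- and handling the unbiased gain estimates in the style of \cite{neu2013efficient} to avoid needing exact arm probabilities --- should produce a bound of the form $\tilde{O}(\sqrt{NT\log N})$, which by itself contradicts the conjecture.

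The chief obstacle is that the bandit FTPL analysis of \cite{Abernethy2015} routes every key stability estimate through $\sup_x h_\cD(x)$, a quantity that is infinite here. To get around this I would introduce a localized or truncated hazard rate --- essentially the \emph{generalized hazard rate} hinted at in the abstract --- that is only required to be well-behaved on the range of perturbation values the algorithm visits with non-negligible probability. The key technical step is a perturbation-switching lemma that replaces the usual factor of $1/h_{\min}$ by an integral or pointwise supremum of the density over this restricted range, together with a calculation showing that for the Gaussian this restricted quantity is only polylogarithmic in $N$ and $T$ once $\eta$ is tuned appropriately. The same framework should simultaneously handle bounded-support distributions such as the uniform, giving a unified treatment of the cases the bounded-hazard condition rules out. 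I expect the main difficulty to lie not in the Gaussian tail estimates themselves, but in formulating the right localized notion of hazard so that the proof degenerates smoothly to the classical analysis when $\cD$ does satisfy a bounded hazard rate.
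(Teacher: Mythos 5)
Your overall strategy is the paper's: the conjecture is refuted, not proved, and the refutation goes through the Gaussian, using exactly the decomposition you describe (overestimation penalty of order $\eta\sqrt{\log N}$ plus a stability/divergence term, with geometric resampling only adding an innocuous additive term). Where you diverge is in the key device. The paper does \emph{not} localize or truncate the hazard rate for the Gaussian. It introduces the generalized hazard rate $h_\alpha(z) = f(z)|z|^\alpha/(1-F(z))^{1-\alpha}$, a \emph{global} pointwise condition, shows $h_\alpha \le 2/\alpha$ for the Gaussian (Lemma~\ref{lemma:gaussian2}), plugs this into the divergence bound of Theorem~\ref{thm:divergence}, and finally tunes $\alpha = 1/\log T$ in Theorem~\ref{thm:master} to get $O(\sqrt{NT\log N}\,\log T)$ (Theorem~\ref{thm:gaussian}). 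Truncation near the edge of the support is the paper's tool only for bounded-support perturbations (Theorem~\ref{thm:bounded_dist_regret}), and there it yields only $(NT)^{2/3}$; so your hope of one localized condition that handles the uniform and the Gaussian uniformly and ``degenerates smoothly'' to the bounded-hazard analysis is more than the paper actually delivers, and the two regimes end up with genuinely different rates.

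The substantive gap is that your ``perturbation-switching lemma'' is never formulated, and as described it would not close. Bounding the density (or a restricted hazard) on the typical range is not by itself enough: in the divergence summand (Lemma~\ref{lem:divergence}) the outer integral runs up to $|g_{t,i}|/p_{t,i}$, which can be as large as $1/p_{t,i}$, so each arm's contribution must come with a factor of $p_{t,i}$ (or $p_{t,i}^{1-\alpha}$) to cancel it. Both the paper's arguments extract that factor through the identity $\EE_{\Lt_{-i}}[1-F(\Lt_{-i}-\Lh_{t-1,i})] = p_{t,i}$ (cf.~\eqref{eq:dphi_ftpl}), writing $f = h\cdot(1-F)$ (or $f = h_\alpha\,(1-F)^{1-\alpha}|z|^{-\alpha}$ with Jensen) inside the integral; in a truncation argument the tail piece must be bounded by a constant independent of $s$ (e.g.\ by $1-F(z_0)$ with $z_0 \asymp \sqrt{\log(NT)}$) so that only a single power of $1/p_{t,i}$ appears and is absorbed. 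With those two ingredients a localized argument for the Gaussian can indeed be completed and would give a $\sqrt{NT}\cdot\mathrm{polylog}$ bound, so your plan is salvageable, but the cancellation mechanism that makes it work is exactly what is missing from the proposal. (Also a small slip: in the bounded-hazard analysis the divergence penalty scales with $\sup_z h(z)$, not with $1/h_{\min}$.)
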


The main contribution of this paper is to provide answers to the questions raised above.
First, we show that boundedness of the hazard rate is certainly not a requirement for achieving sublinear (in $T$) regret.
Bounded support distributions, like the uniform, violate the boundedness condition on the hazard rate in the most extreme way.
Their hazard rate blows up not just asymptotically at infinity, as in the Gaussian case, but as one approaches the right edge of the support.
Yet, we can show (Corollary~\ref{cor:uniform}) that using the uniform distribution results in a regret bound of $O((NT)^{2/3})$. This bound is clearly not optimal.
But optimality is not the point here. What is surprising, especially if one regards Conjecture~\ref{conj:hazard} as plausible, is that a non-trivial sublinear bound holds
at all. In fact, we show (Corollary~\ref{cor:bounded}) that using \emph{any} continuous distribution with bounded support and bounded density results in a sublinear
regret bound.

Second, moving beyond bounded support distributions to ones with unbounded support, we settle Conjecture~\ref{conj:hazard} in the negative. In Theorem~\ref{thm:gaussian}
we show that, instead of suffering linear regret as predicted by Conjecture~\ref{conj:hazard}, a perturbation algorithm using the Gaussian distribution enjoys a near optimal
regret bound of $O(\sqrt{N T \log N} \log T)$. A key ingredient in our approach is a new quantity that we call the \emph{generalized
hazard rate} of a distribution. We show that bounded generalized hazard rate is enough to guarantee sublinear regret in $T$ (Theorem~\ref{thm:master}).

Finally, we investigate the relationship between tail behavior of random perturbations and the regret they induce. We show that heavy tails, along with some fairly mild assumptions, guarantee a bounded hazard rate (Theorem~\ref{thm:gen_hazard_bound_heavy})
and hence previous results can yield regret bounds for these perturbations. However, light tails can fail to have a bounded hazard rate. Nevertheless, we show that under
reasonable conditions, light tailed distributions do have a bounded \emph{generalized} hazard rate (Theorem~\ref{thm:gen_hazard_bound_light}). This result allows us to show that reasonably behaved light-tailed distributions
lead to near optimal regret (Corollary~\ref{cor:lighttailed}). In particular, the exponential power (or generalized normal) family of distributions yields near optimal regret (Theorem~\ref{thm:exp_power})
%!TEX root=bandit_beyond_hazard.tex

\section{Follow the Perturbed Leader Algorithm for Bandits}

Recall the setting of the adversarial multi-armed bandit problem \citep{Auer2002}. An adversary (or Nature) chooses gain vectors $\xx_t \in [-1,0]^N$ for $1 \le t \le T$ ahead of the game. Such an adversary is called {\em oblivious}.
At round $t = 1,\ldots, T$ in a repeated game, the learner must choose a distribution $p_t \in \D_N$ over the set of $N$ available arms (or actions).  The learner plays action $i_t$ sampled according to $p_t$ and accumulates the gains $\xx_{t,{i_t}} \in [-1,0]$. The learner observes only $\xx_{t,{i_t}}$ and receives no information about the values $\xx_{t,j}$ for $j \ne i_t$. 

The learner's goal is to minimize the \emph{regret}. Regret is defined to be the difference in the realized gains and the gains of the best fixed action in hindsight:
\begin{equation} \label{eq:regret}
	\Regret_T := \max_{i \in [N]} \sum_{t=1}^T (\xx_{t,i} - \xx_{t,i_t}).
\end{equation}
To be precise, we consider the \emph{expected} regret, where the expectation is taken with respect to the learner's randomization. Note that, under an oblivious adversary, the only random variables in the above expression are the actions $i_t$ of the learner. For convenience, define the cumulative gain vectors $\XX_t , t=1,2,\dots,T$ by
$$
\XX_t := \sum_{s=1}^t \xx_s.
$$

\subsection{The Gradient-Based Algorithmic Template}

We will consider the algorithmic template described in Framework~\ref{alg:gbpa_bandit}, which is the Gradient Based Prediction Algorithm (GBPA) (see, for example, \cite{Abernethy2015}). Let $\Delta^N$ be the $(N-1)$-dimensional probability simplex 
in $\RR^N$. Denote the standard basis vector along the $i$th dimension by $\ee_i$.
At any round $t$, the action choice $i_t$ is made by sampling from the distribution $p_t$ which is obtained by applying the gradient of a convex function $\tf$ to the estimate $\Lh_{t-1}$ of the cumulative gain vector so far.
The choice of $\tf$ is flexible but it must be a differentiable convex function such that its gradient is always in $\Delta^N$.

Note that we do not require the range of $\gtf$ be contained in the \emph{interior}
of the probability simplex. If we required the gradient to lie in the interior, we would not be able to deal with bounded support distributions such as the uniform distribution.
Even though some entries of the probability vector $p_t$ might be $0$, the estimation step is always well defined since $p_{t,i_t} > 0$.
But allowing $p_{t,i}$ to be zero means that $\lh_t$ is not exactly an unbiased estimator of $\xx_t$. Instead, it is an unbiased estimator on the support of $p_{t}$. That is,
$\EE[ \lh_{t,i} | i_{1:t-1} ] = \xx_{t,i}$ for any $i$ such that $p_{t,i} > 0$. Here, $i_{1:t-1}$ is shorthand for $i_1,\ldots,i_{t-1}$. Therefore, irrespective of whether $p_{t,i} = 0$ or not, we always have
\begin{equation}\label{eq:unbiased}
\EE[ p_{t,i} \lh_{t,i} | i_{1:t-1} ] = p_{t,i} \xx_{t,i} .
\end{equation}
When $p_{t,i} = 0$, we have $\lh_{t,i} = 0$ but $\xx_{t,i} \le 0$, which means that $\lh_t$ overestimates $\xx_t$ outside the support of $p_t$. Hence, we also have
\begin{equation}\label{eq:overestimate}
\EE[ \lh_t | i_{1:t-1} ] \succeq \xx_t ,
\end{equation}
where $\succeq$ means element-wise greater than.

\begin{framework}
\caption{Gradient-Based Prediction Alg. (GBPA) Template for Multi-Armed Bandits.}
\label{alg:gbpa_bandit}
\begin{algorithmic}
\STATE GBPA$(\tf)$: $\tf$ is a differentiable convex function such that $\gtf \in \D^{N}$ 
\STATE \textbf{Nature:} Adversary  chooses gain vectors $\xx_{t} \in [-1,0]^{N}$ for $t=1,\dots,T$ 
\STATE Learner initializes $\Lh_0 = 0$
\FOR{$t = 1$ to $T$}
\STATE \textbf{Sampling:} Learner chooses $i_t$ according to the distribution $p_t = \gtf (\Lh_{t-1})$
\STATE \textbf{Cost:} Learner incurs (and observes) gain $\xx_{t,i_t} \in [-1,0]$
\STATE \textbf{Estimation:} Learner creates estimate of gain vector $\lh_{t} := \frac{\xx_{t,i_t}}{p_{t,i_t}}\ee_{i_t}$
\STATE \textbf{Update:} Cumulative gain estimate so far $\Lh_{t} = \Lh_{t-1} + \lh_t$
\ENDFOR
\end{algorithmic}
\end{framework}

We now present a basic result bounding the expected regret of GBPA in the multi-armed bandit setting. It is basically just a simple modification of the 
arguments in \cite{Abernethy2015} to deal with the possibility that $p_{t,i} = 0$. We state and prove this result here for completeness without making any
claim of novelty. 
 
\begin{lemma}
\label{lem:genericregret1}
{\bf (Decomposition of the Expected Regret)}
Define the non-smooth potential $\f(\XX) = \max_{i} \XX_i$. The expected
regret of GBPA$(\tf)$ can be written as
\begin{equation}
\label{eq:firstregret}
\EE\Regret_T = \f(\XX_T) - \EE\left[ \sum_{t=1}^{T}\langle p_t, \xx_t \rangle \right] .
\end{equation}
Furthermore, the expected regret of GBPA$(\tf)$ can be bounded by the sum of an
overestimation, an underestimation, and a divergence penalty:
\begin{equation}
\label{eq:genericregret}
\EE\Regret_T \leq
 \underbrace{\tf(0)}_{\text{overestimation penalty}} 
+  \EE\left[ \underbrace{\f(\Lh_T)  - \tf(\Lh_T)}_{\text{underestimation penalty}} \right]
+ \EE\left[ \sum_{t = 1}^{T} \underbrace{\EE[D_{\tf}(\Lh_t, \Lh_{t-1})| i_{1:t-1} ]}_{\text{divergence penalty}} \right] ,
\end{equation}
where the expectations are over the sampling of $i_t$ and $D_{\tf}$ is the Bregman divergence induced by $\tf$.
\end{lemma}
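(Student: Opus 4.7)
The plan is to handle the two displayed assertions in sequence, keeping careful track of the fact that $\lh_t$ is only unbiased on the support of $p_t$ (equations~\eqref{eq:unbiased}--\eqref{eq:overestimate}).

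For the equality in \eqref{eq:firstregret}, I would start from the definition of regret, note that $\max_i \XX_{T,i} = \f(\XX_T)$ is deterministic under an oblivious adversary, and then compute $\EE[\xx_{t,i_t}]$ by tower: conditioning on $i_{1:t-1}$ (which determines $p_t$ and $\xx_t$), we get $\EE[\xx_{t,i_t} \mid i_{1:t-1}] = \sum_i p_{t,i}\xx_{t,i} = \langle p_t,\xx_t\rangle$. Summing over $t$ and taking expectation yields \eqref{eq:firstregret}.

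For the bound \eqref{eq:genericregret}, the plan is to introduce $\Lh_T$ as a bridge. First, because $\f(\cdot)=\max_i(\cdot)_i$ is convex and coordinate-wise monotone, the overestimation property \eqref{eq:overestimate} together with Jensen gives
\[
\f(\XX_T) \;\le\; \f(\EE[\Lh_T]) \;\le\; \EE[\f(\Lh_T)].
\]
Then I would split $\f(\Lh_T) = \tf(\Lh_T) + (\f(\Lh_T) - \tf(\Lh_T))$, isolating the underestimation penalty. For the $\tf(\Lh_T)$ term, I would telescope
\[
\tf(\Lh_T) \;=\; \tf(0) + \sum_{t=1}^T \bigl(\tf(\Lh_t) - \tf(\Lh_{t-1})\bigr),
\]
and apply the definition of the Bregman divergence together with $\gtf(\Lh_{t-1}) = p_t$ and $\Lh_t - \Lh_{t-1} = \lh_t$ to rewrite each increment as $\langle p_t, \lh_t\rangle + D_{\tf}(\Lh_t,\Lh_{t-1})$.

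The last step is to take expectations and use \eqref{eq:unbiased}: since $\lh_t$ is supported on $\{i : p_{t,i}>0\}$, conditioning on $i_{1:t-1}$ gives $\EE[\langle p_t,\lh_t\rangle \mid i_{1:t-1}] = \sum_i p_{t,i}\xx_{t,i} = \langle p_t,\xx_t\rangle$ (the zero coordinates of $p_t$ kill any bias of $\lh_t$). Plugging this in and rearranging through \eqref{eq:firstregret} produces the three penalty terms exactly as stated. The only subtle point—and the step I would be most careful about—is the asymmetric use of \eqref{eq:overestimate} versus \eqref{eq:unbiased}: the overestimation gives only an inequality $\f(\XX_T)\le\EE[\f(\Lh_T)]$ (not an equality, as one would have if $\lh_t$ were fully unbiased), but fortunately this inequality goes in the direction needed to produce an upper bound on the regret, so the argument still goes through cleanly.
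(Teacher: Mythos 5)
Your proposal is correct and follows essentially the same route as the paper's proof: the same tower-property argument for \eqref{eq:firstregret}, the same use of \eqref{eq:overestimate} with monotonicity and Jensen to get $\f(\XX_T)\le\EE[\f(\Lh_T)]$, and the same Bregman telescoping combined with \eqref{eq:unbiased} (your telescoping of $\tf(\Lh_T)$ is just the paper's identity for $-\sum_t\langle p_t,\lh_t\rangle$ read in the other direction). No gaps.
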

\begin{proof}
First, note that the regret, by definition, is
\[
\Regret_T = \f(\XX_T) - \sum_{t=1}^T \inner{ \ee_{i_t}, \xx_t } .
\]
Under an oblivious adversary, only the summation on the right hand side is random. Moreover $\EE[\inner{\ee_{i_t},\xx_t} | i_{1:t-1} ] = \inner{ p_t , \xx_t}$. This proves the claim in~\eqref{eq:firstregret}.

From~\eqref{eq:unbiased}, we know that $\EE[ \inner{ p_t, \lh_t } | i_{1:t-1} ] =  \inner{ p_t , \xx_t}$ even if some entries in $p_t$ might be zero. Therefore, we have
\begin{equation}\label{eq:intermediateregret}
\EE\Regret_T = \f(\XX_T) - \EE\left[ \sum_{t=1}^T \inner{ p_t, \lh_t } \right].
\end{equation}
From~\eqref{eq:overestimate}, we know that $\XX_T \le \EE[ \Lh_T]$. This implies
\begin{equation}\label{eq:comparator}
\f(\XX_T) \le \f(\EE[ \Lh_T]) \le \EE[\f(\Lh_T)],
\end{equation}
where the first inequality is because $G \succeq G' \Rightarrow \f(G) \ge \f(G')$, and the second inequality is due to the convexity of $\f$.
Plugging~\eqref{eq:comparator} into~\eqref{eq:intermediateregret} yields
\begin{equation}\label{eq:intermediateregret2}
\EE\Regret_T \le  \EE\left[ \f(\Lh_T) - \sum_{t=1}^T \inner{ p_t, \lh_t } \right].
\end{equation}
Now, recalling the definition of Bregman divergence
\[
D_{\tf}(\Lh_t,\Lh_{t-1}) = \tf(\Lh_t) - \tf(\Lh_{t-1}) -  \inner{ \gtf(\Lh_{t-1}), \Lh_t - \Lh_{t-1} }, 
\]
we can write,
\begin{align}
- \sum_{t=1}^T \inner{ p_t, \lh_t }  &= - \sum_{t=1}^T \inner{ \gtf(\Lh_{t-1}), \lh_t } \\
\notag &=  - \sum_{t=1}^T \inner{ \gtf(\Lh_{t-1}), \Lh_t - \Lh_{t-1} } \\
\notag &=  \sum_{t=1}^T \left( D_{\tf}(\Lh_t,\Lh_{t-1}) + \tf(\Lh_{t-1}) - \tf(\Lh_t) \right)\\
&=  \tf(\Lh_{0}) - \tf(\Lh_T) + \sum_{t=1}^T D_{\tf}(\Lh_t,\Lh_{t-1}). \label{eq:bregmanmagic}
\end{align}
The proof ends by plugging ~\eqref{eq:bregmanmagic} into~\eqref{eq:intermediateregret2} and noting that $\tf(\Lh_{0}) = \tf(0)$ is not random.
\end{proof}

\subsection{Stochastic Smoothing of Potential Function} 

Let $\cD$ be a continuous distribution with finite expectation, probability density function $f$, and cumulative distribution function $F$. Consider GBPA with potential function of the form: 
\begin{equation}
	\label{eq:potential_ftpl}
	\tf(\XX; \cD) = \EE_{Z_{1}, \ldots, Z_{N} \stackrel{\text{i.i.d}}{\sim} \cD} \f(\XX + Z),
\end{equation}
which is a \emph{stochastic smoothing} of the non-smooth function $\f(G) = \max_{i} G_{i}$. Note that $Z = (Z_1,\ldots,Z_N) \in \mathbb{R}^N$. We will often hide the dependence on the distribution $\cD$ if the distribution is obvious from the context
or when the dependence on $\cD$ is not of importance in the argument.
Since $\f$ is convex, $\tf$ is also convex. For stochastic smoothing, we have the following result to control the underestimation and overestimation penalty.

\begin{lemma}\label{lem:underover}
For any $G$, we have
\begin{equation}\label{eq:underover}
\f(G) + \EE[Z_1] \le \tf(G) \le \f(G) + EMAX(N)
\end{equation}
where $EMAX(N)$ is any function such that
\[
\EE_{Z_1,\dots,Z_N} [\max_i Z_i] \le EMAX(N).
\]
In particular, this implies that the overestimation penalty $\tf(0)$ is upper bounded by $\f(0) + EMAX(N) = EMAX(N)$ and the underestimation penalty $\f(\Lh_T)  - \tf(\Lh_T)$ is upper bounded by $-\EE[Z_1]$.
\end{lemma}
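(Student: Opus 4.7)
The plan is to prove the two-sided bound on $\tilde{\Phi}(G)$ by direct manipulation of the definition $\tilde{\Phi}(G) = \mathbb{E}_Z[\max_i (G_i + Z_i)]$ and then read off the two penalty bounds as immediate corollaries.

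For the upper bound, I would use the elementary inequality $\max_i (G_i + Z_i) \le \max_i G_i + \max_i Z_i$, which holds because if $i^\star$ attains the left-hand maximum, then $G_{i^\star} + Z_{i^\star} \le \max_i G_i + \max_j Z_j$. Taking expectation over $Z$ and using the hypothesis $\mathbb{E}[\max_i Z_i] \le EMAX(N)$, the upper bound $\tilde{\Phi}(G) \le \Phi(G) + EMAX(N)$ follows directly. For the lower bound, I would fix $i^\star \in \arg\max_i G_i$, so that $\max_i (G_i + Z_i) \ge G_{i^\star} + Z_{i^\star} = \Phi(G) + Z_{i^\star}$; taking expectation and using the fact that $Z_{i^\star}$ is distributed as $Z_1$ (since the $Z_i$ are i.i.d.\ and $i^\star$ is a deterministic choice depending only on $G$) gives $\tilde{\Phi}(G) \ge \Phi(G) + \mathbb{E}[Z_1]$.

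For the penalty bounds, setting $G = 0$ in the upper bound yields $\tilde{\Phi}(0) \le \Phi(0) + EMAX(N) = EMAX(N)$ since $\Phi(0) = \max_i 0 = 0$. Setting $G = \hat{G}_T$ in the lower bound and rearranging gives $\Phi(\hat{G}_T) - \tilde{\Phi}(\hat{G}_T) \le -\mathbb{E}[Z_1]$.

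There is no real obstacle here; the only point requiring a moment of care is the observation that the index $i^\star = \arg\max_i G_i$ used in the lower bound is nonrandom (it depends only on the deterministic argument $G$), which is what lets us conclude $\mathbb{E}[Z_{i^\star}] = \mathbb{E}[Z_1]$ by the i.i.d.\ assumption. Everything else is a one-line inequality.
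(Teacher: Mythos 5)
Your proof is correct and follows essentially the same route as the paper: the upper bound via $\max_i(G_i+Z_i) \le \max_i G_i + \max_i Z_i$, and the lower bound by comparing to the (deterministic) maximizer of $G$, which is just the paper's observation that $\max_i(G_i+\EE[Z_i]) \le \EE[\max_i(G_i+Z_i)]$ phrased pointwise. The penalty bounds are read off exactly as in the paper.
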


\begin{proof}
We have,
\begin{align*}
\f(G) + \EE[Z_1] &= \max_i G_i + \EE[Z_i] = \max_i (G_i + \EE[Z_i]) \\
&\le \EE[\max_i (G_i + Z_i)] = \tf(G)\\
&\le \EE[\max_i G_i + \max_i Z_i ] = \max_i G_i + \EE[\max_i Z_i] = \f(G) + \EE[\max_i Z_i].
\end{align*}
Noting that $ \EE[\max_i Z_i]  \le EMAX(N)$ finishes the proof.
\end{proof}

Observe that $\f(G+Z)$ as a function of $G$ is differentiable with probability $1$ (under the randomness of the $Z_i$'s) due to the fact that $Z_i$'s are random variables with a density. By Proposition 2.3 of \cite{bertsekas1973}, we can swap the order of differentiation and expectation:
\begin{equation}
	\label{eq:grad_ftpl}
	\gtf(\XX; \cD) = \EE_{Z_{1}, \ldots, Z_{N} \stackrel{\text{i.i.d}}{\sim} \cD} e_{i^*}, \text{ where } i^* = \argmax_{i = 1, \ldots, N} \{\XX_{i} + Z_{i}\}.
\end{equation}
Note that, for any $\XX$, the random index $i^*$ is unique with probability $1$. Hence, ties between arms can be resolved arbitrarily. It is clear from above that $\gtf$, being an expectation of vectors in the probability simplex, is in the probability simplex.
Thus, it is a valid potential to be used in Framework~\ref{alg:gbpa_bandit}. Now we derive an identity to write the gradient of the smoothed potential function in terms of the expectation of the cumulative distribution function,
\begin{equation}\label{eq:dphi_ftpl}
\begin{aligned}
	\nabla_{i}\tf(\XX) = \frac{\partial \tf}{\partial \XX_i} &= \EE_{Z_1,\ldots,Z_{N}} \ind\{\XX_{i} + Z_{i} > \XX_{j} + Z_{j}, \forall j \neq i\} \\
				&= \EE_{\Lt_{-i}}[\PP_{Z_i}[Z_{i} > \Lt_{-i} - \XX_{i}]] = \EE_{\Lt_{-i}}[1 - F(\Lt_{-i} - \XX_{i})]
\end{aligned}
\end{equation}
where $\Lt_{-i} = \max_{j \ne i} \XX_j + Z_j$. If $\cD$ has unbounded support then this partial derivative is non-zero for all $i$ given any $\XX$. However, it can be zero if $\cD$ has bounded support. Similarly, we have the following useful identity that writes the diagonal of the Hessian of the smoothed potential function in terms of the expectation of the probability density function.
\begin{equation}\label{eq:hessian}
\begin{aligned}
	\nabla^2_{ii}\tf(\XX) &= \frac{\partial}{\partial \XX_i} \nabla_{i}\tf(\XX) 
	 = \frac{\partial}{\partial \XX_i} \EE_{\Lt_{-i}}[1 - F(\Lt_{-i} - \XX_{i})]  \\
	 &=  \EE_{\Lt_{-i}}\left[\frac{\partial}{\partial \XX_i}(1 - F(\Lt_{-i} - \XX_{i}))\right] 
	 =  \EE_{\Lt_{-i}}f(\Lt_{-i} - \XX_{i}).
\end{aligned}
\end{equation}

\subsection{Connection to Follow the Perturbed Leader}\label{sec:ftpl:ftpl}
The sampling step of Framework~\ref{alg:gbpa_bandit} with a stochastically smoothed $\f$ as the potential $\tf$ (Equation~\ref{eq:potential_ftpl}) can be done efficiently. Instead of evaluating the expectation (Equation~\ref{eq:grad_ftpl}), we just take a random sample. Doing so gives us an equivalent of Follow the Perturbed Leader Algorithm (FTPL) \citep{KV-FTL} applied to the bandit setting. On the other hand, the estimation step is hard because generally there is no closed-form expression for $\gtf$. 

To address this issue, \cite{neu2013efficient} proposed Geometric Resampling (GR), an iterative resampling process to estimate $\gtf$ (with bias).
They showed that the extra regret after stopping at $M$ iterations of GR introduces an estimation bias that is at most $\frac{NT}{eM}$ as an additive term.
That is, all GBPA regret bounds that we prove will hold for the corresponding FTPL algorithm that does $M$ iterations of GR at every time step, with an extra additive $\frac{NT}{eM}$ term.
This extra term does not affect the regret rate as long as $M = \sqrt{NT}$, because the lower bound for any adversarial multi-armed bandit algorithm is of the order $\sqrt{NT}$.

\subsection{The Role of the Hazard Rate and Its Limitation}
In previous work, \cite{Abernethy2015} proved that for a continuous random variable $Z$ with finite and nonnegative expectation and support on the whole real line $\RR$, if the hazard rate of the random variable is bounded, i.e,
\[
\sup_z \frac{f(z)}{1-F(z)} < \infty,
\]
then the expected regret of GBPA can be upper bounded as
\[
\EE\Regret_T  = O\Big(\sqrt{NT \times EMAX(N)}\Big) .
\]
Common families of distributions whose regret can be controlled in this way include Gumbel, Frechet, Weibull, Pareto, and Gamma (see \cite{Abernethy2015} for details). However, there are many other families of distributions where the hazard rate condition fails. For example, if the random variable has a bounded support, then the hazard rate would certainly explode at the end of the support. This is, in some sense, an extreme case of violation because the random variable does not even have a tail. There are also some random variables that do have support on $\RR$ but have unbounded hazard rate, e.g. Gaussian, where the hazard rate monotonically increases to infinity. How can we perform analyses of the expected regret of GBPA using those random variables as perturbations? To address these issues, we need to go beyond the hazard rate. 

%!TEX root=bandit_beyond_hazard.tex

\section{Perturbations with Bounded Support}

In this section, we prove that GBPA with any continuous distribution that has bounded support and bounded density enjoys sublinear expected regret.
From Lemma \ref{lem:genericregret1} we see that the expected regret can be upper bounded by the sum of three terms. The overestimation penalty can be bounded very easily via Lemma~\ref{lem:underover} for a distribution with bounded support. The underestimation penalty is non-positive as long as the distribution has non-negative expectation. The only term that needs to be controlled with some effort is the divergence penalty.

We first present a general lemma that allows us to write the divergence penalty for a stochastically smoothed potential $\tf$ as a sum involving certain double integrals.

\begin{lemma}\label{lem:divergence}
When using a stochastically smoothed potential as in \eqref{eq:potential_ftpl}, the divergence penalty can be written as
\begin{equation}\label{eq:divbound}
\EE\left[ D_{\tf}(\Lh_t, \Lh_{t-1}) | i_{1:t-1} \right] = 
\sum_{i \in \text{supp}(p_t)} p_{t,i} \int_{0}^{\left|\frac{g_{t,i}}{p_{t,i}}\right|}\EE_{\Lh_{-i}}\left[ \int_{0}^s f(\Lh_{-i} - \Lh_{t-1,i} + r) dr \right] ds 
\end{equation}
where $p_t = \nabla \tf(\Lh_{t-1})$, $\Lh_{-i} = \max_{j \ne i} \Lh_{t-1,j} + Z_j$ and $\text{supp}(p_t) = \{ i \::\: p_{t,i} > 0 \}$.
\end{lemma}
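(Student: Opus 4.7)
The plan is to reduce the Bregman divergence to a one-dimensional integral along the sampled coordinate, then plug in the Hessian identity already derived in \eqref{eq:hessian}, and finally average over $i_t$. The crucial structural observation is that the update $\Lh_t - \Lh_{t-1} = \lh_t = \tfrac{g_{t,i_t}}{p_{t,i_t}} \ee_{i_t}$ is supported on a single coordinate. Therefore the Bregman divergence $D_{\tf}(\Lh_t,\Lh_{t-1})$ depends on $\tf$ only along the line $\{\Lh_{t-1} + u \ee_{i_t} : u \in \RR\}$, and the standard second-order integral remainder
\[
D_{\tf}\bigl(\Lh_{t-1} + h\ee_i,\ \Lh_{t-1}\bigr) \;=\; \int_0^{h}\!\!\int_0^{s} \htf_{ii}\bigl(\Lh_{t-1} + r\ee_i\bigr)\,dr\,ds
\]
applies with $h = g_{t,i_t}/p_{t,i_t}$. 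First I would derive this remainder formula by setting $\psi(u) := \tf(\Lh_{t-1} + u\ee_i)$, noting $\psi'(0) = \inner{\gtf(\Lh_{t-1}),\ee_i}$, and writing $\psi(h)-\psi(0)-h\psi'(0) = \int_0^h(\psi'(s)-\psi'(0))\,ds = \int_0^h\int_0^s \psi''(r)\,dr\,ds$.

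Second, since $g_{t,i} \in [-1,0]$ and $p_{t,i_t} > 0$ (the sampling step guarantees this), $h$ is non-positive. The clean way to present the integral is therefore to substitute $r \mapsto -r$ and $s \mapsto -s$ to switch to integration over $[0,|h|]$. This flips the sign of $r$ inside $\htf_{ii}$, converting $\htf_{ii}(\Lh_{t-1} - r\ee_i)$ into the integrand. Then I plug in \eqref{eq:hessian}, which gives
\[
\htf_{ii}\bigl(\Lh_{t-1} - r\ee_i\bigr) \;=\; \EE_{\Lh_{-i}}\!\bigl[\,f\bigl(\Lh_{-i} - (\Lh_{t-1,i} - r)\bigr)\bigr] \;=\; \EE_{\Lh_{-i}}\!\bigl[\,f\bigl(\Lh_{-i} - \Lh_{t-1,i} + r\bigr)\bigr],
\]
where $\Lh_{-i} = \max_{j\ne i} \Lh_{t-1,j} + Z_j$; here I use that perturbing the $i$-th coordinate of $\XX$ leaves $\Lt_{-i}$ untouched, so evaluating \eqref{eq:hessian} at $\XX = \Lh_{t-1} - r\ee_i$ simply shifts the argument of $f$ by $r$.

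Third, I take the conditional expectation over $i_t$ given $i_{1:t-1}$. By definition $i_t$ is drawn from $p_t = \gtf(\Lh_{t-1})$, so
\[
\EE\bigl[D_{\tf}(\Lh_t,\Lh_{t-1}) \mid i_{1:t-1}\bigr] \;=\; \sum_{i=1}^N p_{t,i}\, D_{\tf}\bigl(\Lh_{t-1} + \tfrac{g_{t,i}}{p_{t,i}}\ee_i,\, \Lh_{t-1}\bigr).
\]
Arms with $p_{t,i} = 0$ have zero sampling probability and contribute nothing, which is why the outer sum can be restricted to $\mathrm{supp}(p_t)$ (and the ratio $g_{t,i}/p_{t,i}$ is well-defined only there). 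Substituting the two-layer integral representation from the previous step and using Tonelli to pass the inner expectation $\EE_{\Lh_{-i}}$ through the positive quantity $f \ge 0$ yields precisely \eqref{eq:divbound}.

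The main obstacle I anticipate is simply sign-bookkeeping: because $g_{t,i}/p_{t,i} \le 0$, the natural second-order remainder is written with a signed integral $\int_0^h$, and one has to be careful that after the change of variable the integrand is evaluated at $\Lh_{t-1,i} - r$ (hence the $+r$ inside $f$ in the stated formula). Beyond that, everything is routine: convexity of $\tf$ guarantees the integrand is non-negative, so all the interchanges of sum, expectation, and integral are justified by Tonelli's theorem without further moment conditions.
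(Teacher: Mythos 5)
Your proposal is correct and follows essentially the same route as the paper's proof: reduce the Bregman divergence along the single perturbed coordinate to a double integral of $\nabla^2_{ii}\tf$, substitute the Hessian identity \eqref{eq:hessian}, and average over $i_t \sim p_t$ restricted to $\mathrm{supp}(p_t)$. The only cosmetic difference is that the paper builds the sign flip into the definition $h_i(r) = D_{\tf}(\Lh - r\ee_i, \Lh)$ rather than doing your explicit change of variables, and your sign bookkeeping there checks out.
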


\begin{proof}
To reduce clutter, we drop the time subscripts: we use $\Lh$ to denote the cumulative estimate $\Lh_{t-1}$, $\lh$ to denote the marginal estimate $\lh_t = \Lh_{t} - \Lh_{t - 1}$, $p$ to denote $p_t$, and $\xx$ to denote the true gain $\xx_t$.
Note that by definition of Framework~\ref{alg:gbpa_bandit}, $\lh$ is a sparse vector with one non-zero and non-positive coordinate $\lh_{i_{t}} = \xx_{i_t} / p_{i_t} = -\left| \xx_{i_t} / p_{i_t} \right|$. Morever, conditioned on $i_{1:t-1}$, $i_t$ takes value
$i$ with probability $p_i$. For any $i \in \text{supp}(p)$, let
\[
h_i(r) =  D_\tf(\Lh - r \ee_{i}, \Lh),
\]
so that $h_i'(r) = -\nabla_{i} \tf\left( \Lh - r \ee_{i} \right) + \nabla_{i} \tf\left( \Lh \right)$ and $h_i''(r) = \nabla^2_{ii} \tf\left( \Lh - r \ee_{i} \right)$.
Now we write:
\begin{align*}
\EE[D_{\tf}(\Lh + \lh, \Lh)| i_{1:t-1} ] &= \sum_{i \in \text{supp}(p)} p_i D_{\tf}(\Lh + \xx_i/p_i \ee_i,\Lh) = \sum_{i \in \text{supp}(p)} p_i D_{\tf}(\Lh - \left| \xx_i/p_i \right| \ee_i,\Lh) \\
&= \sum_{i \in \text{supp}(p)} p_i h_i(\left|g_{i}/p_{i}\right|) = \sum_{i \in \text{supp}(p)} p_i \int_{0}^{\left|g_{i}/p_{i}\right|}\int_{0}^s h_i''(r)dr \, ds \\
&= \sum_{i \in \text{supp}(p)} p_i \int_{0}^{\left|g_{i}/p_{i}\right|}\int_{0}^s \nabla^2_{ii} \tf\left( \Lh - r \ee_{i} \right) dr \, ds \\
&= \sum_{i \in \text{supp}(p)} p_i \int_{0}^{\left|g_{i}/p_{i}\right|}\int_{0}^s \EE_{\Lh_{-i}} f(\Lh_{-i} - \Lh_i + r) dr \, ds \\
&= \sum_{i \in \text{supp}(p_t)} p_{t,i} \int_{0}^{\left|g_{i}/p_{i}\right|} \EE_{\Lh_{-i}}\left[ \int_{0}^s f(\Lh_{-i} - \Lh_{i} + r) dr \right] ds. 
\end{align*}
The second equality on the first line implicitly used the assumption that $g_i \le 0$, i.e, the ``gains'' are non-positive.
The second equality on the second line used that $h_i(0) = 0$, and the equality on the fourth line used Equation \eqref{eq:hessian}.
\end{proof}

Note that each summand in the divergence penalty expression above involves an integral of the density function of the distribution $\cD$ over an interval. The main idea to control the divergence penalty for a bounded support distribution is to truncate the interval at the end of the support. For points that are close to the end of the support, we bound the integral by the product of the bound on the density and the interval length. For points that are far from the end of the support, we bound the integral through the hazard rate as was done by \cite{Abernethy2015}.

For a general continuous random variable $Z$ with bounded density, bounded support, we first shift it (which obviously does not change the distribution of the random action choice $i_t$ and hence the expected regret) and scale it so that the support is a subset of $[0,1]$ with $\sup\{z:F(z) = 0\} = 0$ and $\inf\{z:F(z) = 1\} = 1$ where $F$ denotes the CDF of $Z$. A benefit of this normalization is that the expectation of the random variable becomes non-negative so the underestimation penalty is guaranteed to be non-positive. After scaling, we assume that the bound on the density is $L$. We consider the perturbation $\eta Z$ where $\eta > 0$ is a tuning parameter. Write $F_\eta(x)$ and $f_\eta(x)$ to denote the CDF and PDF of the scaled random variable $\eta Z$ respectively. If $F$ is strictly increasing, we know that $F^{-1}$ exists. If not, define $F^{-1}(y) = \inf\{z: F(z) = y\}$. Elementary calculation gives the following useful facts:
$$
F_\eta(z) = F(\frac{z}{\eta}), f_\eta(z) = \frac{f(\frac{z}{\eta})}{\eta}, F^{-1}_\eta(y) = \eta F^{-1}(y).
$$
\begin{theorem}\label{thm:bounded_dist_regret}
{\bf (Divergence Penalty Control, Bounded Support)}
The divergence penalty in the GBPA regret bound using the scaled perturbation $\eta Z$, where $Z$ is drawn from a bounded support distribution satisfying the conditions above, can be upper bounded, for any $\epsilon > 0$, by
$$
NL \Big(\frac{1}{2 \eta \epsilon} + 1 - F^{-1}(1-\epsilon) \Big).
$$
\end{theorem}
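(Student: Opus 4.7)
My plan is to bound the per-round divergence penalty handed to us by Lemma~\ref{lem:divergence}, which, in terms of the scaled density $f_\eta$, reads
\[
\sum_{i\in\text{supp}(p_t)} p_{t,i}\int_0^{b_i}\EE_{\Lh_{-i}}\int_0^s f_\eta(\Lh_{-i}-\Lh_{t-1,i}+r)\,dr\,ds,
\]
where $b_i := |g_{t,i}/p_{t,i}|$. A Fubini rearrangement first converts the inner $\int_0^{b_i}\!\int_0^s(\cdot)\,dr\,ds$ into $\int_0^{b_i}(b_i-r)\,f_\eta(\cdot)\,dr$, which is easier to split according to the value of the argument.

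The central idea is to decompose $f_\eta$ into a \emph{bulk} piece amenable to a hazard-rate-style bound and a \emph{near-edge} piece handled by the crude density bound $L/\eta$. Setting $z^*:=\eta F^{-1}(1-\epsilon)$, so that $1-F_\eta(z^*)=\epsilon$, the key pointwise inequality I would establish is
\[
f_\eta(y)\ \le\ \tfrac{L}{\eta\epsilon}\bigl(1-F_\eta(y)\bigr)\ +\ \tfrac{L}{\eta}\,\mathbb{1}\{z^* < y \le \eta\},
\]
verified casewise: on $\{y\le z^*\}$ the factor $1-F_\eta(y)\ge\epsilon$ makes the first term already dominate $L/\eta\ge f_\eta(y)$; on $(z^*,\eta]$ the indicator term supplies $L/\eta\ge f_\eta(y)$; and $f_\eta$ vanishes elsewhere. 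Morally, this imposes an artificial hazard rate of $L/(\eta\epsilon)$ on the bulk of the distribution, paying a small additive correction for the $\epsilon$-tail near the edge of support.

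Plugging this decomposition into the rearranged integral and abbreviating $x:=\Lh_{-i}-\Lh_{t-1,i}$, monotonicity $1-F_\eta(x+r)\le 1-F_\eta(x)$ bounds the bulk contribution by $\tfrac{L b_i^2}{2\eta\epsilon}(1-F_\eta(x))$, while the edge contribution is at most $L b_i(1-F^{-1}(1-\epsilon))$, since the set of $r\in[0,b_i]$ for which $x+r\in(z^*,\eta]$ has length at most $\eta - z^* = \eta(1-F^{-1}(1-\epsilon))$. Taking expectation over $\Lh_{-i}$, identity~\eqref{eq:dphi_ftpl} collapses $\EE[1-F_\eta(x)]=p_{t,i}$; multiplying by the outer $p_{t,i}$ and invoking the cancellation $p_{t,i}b_i=|g_{t,i}|\le 1$ (so $p_{t,i}^2 b_i^2 \le g_{t,i}^2 \le 1$) yields a per-arm bound of $L/(2\eta\epsilon)+L(1-F^{-1}(1-\epsilon))$, and summing over the at most $N$ arms in $\text{supp}(p_t)$ gives the claim.

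The only genuinely creative step is the density decomposition in the second paragraph: the hazard rate of \cite{Abernethy2015} explodes at the edge of a bounded support, so the key is the manual truncation at the $(1-\epsilon)$-quantile that extracts a finite effective hazard rate on the bulk while absorbing the $\epsilon$-tail as a small additive penalty. Once that inequality is in hand, everything else is bookkeeping that hinges on the same $\EE_{\Lh_{-i}}[1-F_\eta]=p_{t,i}$ collapse exploited in the bounded-hazard-rate analysis; this is what causes $p_{t,i}^2 b_i^2$ to telescope to $g_{t,i}^2 \le 1$, producing a per-arm bound independent of $1/p_{t,i}$.
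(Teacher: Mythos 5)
Your proposal is correct and follows essentially the same route as the paper's proof: truncate at the $(1-\epsilon)$-quantile, bound the bulk via the effective hazard rate $L/(\eta\epsilon)$ (using $f_\eta \le L/\eta$ and $1-F_\eta \ge \epsilon$ there), bound the edge piece by the density bound times the interval length $\eta(1-F^{-1}(1-\epsilon))$, and collapse $\EE_{\Lh_{-i}}[1-F_\eta] = p_{t,i}$ so that $p_{t,i}^2 b_i^2 = g_{t,i}^2 \le 1$. Packaging the split as a pointwise density inequality (after a Fubini rearrangement) rather than splitting the integration domain is only a cosmetic difference.
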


\begin{proof}
From Lemma~\ref{lem:divergence}, we have, with $\Lh_{-i} = \max_{j \ne i} \Lh_{t-1,j} + \eta Z_j$,
\begin{align}
\notag &\quad \EE\left[ D_{\tf}(\Lh_t, \Lh_{t-1}) | i_{1:t-1} \right] \\
\notag &= \sum_{i \in \text{supp}(p_t)} p_{t,i} \int_{0}^{\left|\frac{g_{t,i}}{p_{t,i}}\right|}\EE_{\Lh_{-i}}\left[ \int_{0}^s f_\eta(\Lh_{-i} - \Lh_{t-1,i} + r) dr \right] ds \\
\notag & = \sum_{i \in \text{supp}(p_t)} p_{t,i} \int_{0}^{\left|\frac{g_{t,i}}{p_{t,i}}\right|}\EE_{\Lh_{-i}}\left[ \int_{\Lh_{-i} - \Lh_{t-1,i}}^{\Lh_{-i} - \Lh_{t-1,i} + s} f_\eta(z)dz \right] ds  \\
\notag &\le \sum_{i \in \text{supp}(p_t)} p_{t,i} \int_{0}^{\left|\frac{g_{t,i}}{p_{t,i}}\right|} \Big( \EE_{\Lh_{-i}}
\notag \Bigg[  \underbrace{ \int_{[\Lh_{-i} - \Lh_{t-1,i},\Lh_{-i} - \Lh_{t-1,i} + s] \backslash [F_\eta^{-1}(1-\epsilon),\eta]} f_\eta(z)dz }_{(I)} \Bigg]  \\
&\quad \quad + \underbrace{ \int_{[F_\eta^{-1}(1-\epsilon),\eta]} f_\eta(z)dz }_{(II)} \Big) ds. \label{eq:twointegrals}
\end{align}

We bound the two integrals above differently. For the first integral, we add the restriction $f_\eta(z) > 0$ by intersecting the integral interval with the support of the function $f_\eta(z)$, denoted as $I_{f_\eta(z)}$ so that $1-F_\eta(z)$ is not $0$ on the interval to be integrated. Thus, we get,
\begin{align}
\notag (I) &= \int_{([\Lh_{-i} - \Lh_{t-1,i},\Lh_{-i} - \Lh_{t-1,i} + s] \backslash [F_\eta^{-1}(1-\epsilon),\eta]) \cap I_{f_\eta(z)}} f_\eta(z)dz \\
\notag &= \int_{([\Lh_{-i} - \Lh_{t-1,i},\Lh_{-i} - \Lh_{t-1,i} + s] \backslash [F_\eta^{-1}(1-\epsilon),\eta]) \cap I_{f_\eta(z)}} (1-F_\eta(z)) \cdot \frac{f_\eta(z)}{1-F_\eta(z)} dz \\
\notag &\le \int_{([\Lh_{-i} - \Lh_{t-1,i},\Lh_{-i} - \Lh_{t-1,i} + s] \backslash [F_\eta^{-1}(1-\epsilon),\eta]) \cap I_{f_\eta(z)}} (1-F_\eta(z)) \cdot \frac{L}{\eta \epsilon} dz\\
&\le (1-F_\eta(\Lh_{-i} - \Lh_{t-1,i}) ) \frac{sL}{\eta \epsilon} . \label{eq:integral1}
\end{align}
The first inequality holds because $f_\eta(z) \le L/\eta$ and $(1-F_\eta(z)) \ge \epsilon$ on the set of $z$'s over which we are integrating.
The second inequality holds because on the set under consideration $1-F_\eta(z) \le 1-F_\eta(\Lh_{-i} - \Lh_{t-1,i})$ and the measure of the set is at most $s$.

For the second integral, we use the bound $f_\eta(z) \le L/\eta$ again to get,
\begin{equation}\label{eq:integral2}
(II) = \int_{[F_\eta^{-1}(1-\epsilon),\eta]} f_\eta(z)dz \le \frac{L}{\eta} \cdot (\eta-F_\eta^{-1}(1-\epsilon)) .
\end{equation}

Plugging ~\eqref{eq:integral1} and~\eqref{eq:integral2} into~\eqref{eq:twointegrals}, we can bound the divergence penalty by,
\begin{align*}
&\le  \sum_{i \in \text{supp}(p_t)} p_{t,i} \int_{0}^{\left|\frac{g_{t,i}}{p_{t,i}}\right|} \Big( \EE_{\Lh_{-i}} [1-F_\eta(\Lh_{-i} - \Lh_{t-1,i}) ] \frac{sL}{\eta \epsilon} +  \frac{L(\eta - F_\eta^{-1}(1-\epsilon))}{\eta} \Big) ds \\
&= \sum_{i \in \text{supp}(p_t)} p_{t,i} \int_{0}^{\left|\frac{g_{t,i}}{p_{t,i}}\right|} \Big( p_{t,i} \frac{sL}{\eta \epsilon} +  L(1 - F^{-1}(1-\epsilon)) \Big) ds\\
&= \sum_{i \in \text{supp}(p_t)} p_{t,i}  \Big( p_{t,i} \frac{L}{\eta \epsilon} \frac{g_{t,i}^2}{2p_{t,i}^2} +  L(1 - F^{-1}(1-\epsilon)) \frac{|g_{t,i}|}{p_{t,i}} \Big) \\
&\le \sum_{i \in \text{supp}(p_t)} \Big( \frac{L}{2 \eta \epsilon}  +  L(1 - F^{-1}(1-\epsilon)) \Big) \\
&\le NL \Big(\frac{1}{2 \eta \epsilon} + 1 - F^{-1}(1-\epsilon) \Big).
\end{align*}
The second to last inequality holds because $|g_{t,i}| \le 1$ and the last inequality holds because the sum over $i$ is at most over all $N$ arms.
\end{proof}

The regret bound for the uniform distribution is now an easy corollary.
\begin{corollary}\label{cor:uniform}
{\bf (Regret Bound for Uniform)}
For GBPA run with a stochastically smoothed potential using an appropriately scaled $[0,1]$ uniform perturbation where $\eta = (NT)^{2/3}$, the expected regret can be upper bounded by
$3(NT)^{2/3}$.
\end{corollary}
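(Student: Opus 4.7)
The plan is to instantiate the three-term decomposition from Lemma~\ref{lem:genericregret1} for the scaled uniform perturbation $\eta Z$ with $Z \sim \text{Unif}[0,1]$, and then to balance the free parameters $\epsilon$ and $\eta$ using the divergence bound established in Theorem~\ref{thm:bounded_dist_regret}.

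First, I would control the overestimation and underestimation penalties via Lemma~\ref{lem:underover}. The maximum of $N$ i.i.d.\ uniform $[0,\eta]$ random variables is at most $\eta$, so one may take $EMAX(N) = \eta$, which bounds the overestimation penalty $\tf(0)$ by $\eta$. Because the scaled uniform distribution has non-negative mean $\eta/2$, the underestimation contribution $-\EE[\eta Z_1] = -\eta/2$ is non-positive and may be dropped outright.

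Next, the per-round divergence penalty follows directly from Theorem~\ref{thm:bounded_dist_regret} instantiated at the uniform distribution, for which $L = 1$ and $F^{-1}(1-\epsilon) = 1-\epsilon$. This yields a per-round bound of $N\bigl(\tfrac{1}{2\eta\epsilon} + \epsilon\bigr)$ for every $\epsilon > 0$. Optimizing the bracket in $\epsilon$ gives $\epsilon^{\star} = 1/\sqrt{2\eta}$, at which the bracket equals $\sqrt{2/\eta}$; summing across $T$ rounds, the total divergence penalty is at most $NT\sqrt{2/\eta}$.

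Combining the three terms, the expected regret is bounded by $\eta + NT\sqrt{2/\eta}$. The choice $\eta = (NT)^{2/3}$ balances these two terms and produces $(1 + \sqrt{2})(NT)^{2/3} < 3 (NT)^{2/3}$, which is exactly the claim. There is no real obstacle here: once Theorem~\ref{thm:bounded_dist_regret} is in hand, the argument is essentially arithmetic, and the only bit of care needed is the two-stage tuning, first of $\epsilon$ to minimize the per-round divergence bound, and then of $\eta$ to trade off the linear-in-$\eta$ overestimation against the $NT/\sqrt{\eta}$ divergence contribution.
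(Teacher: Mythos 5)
Your proposal is correct and follows essentially the same route as the paper: bound the overestimation penalty by $\eta$, drop the non-positive underestimation term, apply Theorem~\ref{thm:bounded_dist_regret} with $L=1$ and $1-F^{-1}(1-\epsilon)=\epsilon$, tune $\epsilon = 1/\sqrt{2\eta}$, and then set $\eta = (NT)^{2/3}$ to get $(1+\sqrt{2})(NT)^{2/3} \le 3(NT)^{2/3}$. The only (harmless) difference is that you explicitly present $\epsilon^{\star}$ as the minimizer of the per-round bracket, whereas the paper simply plugs in the same value.
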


\begin{proof}
For $[0,1]$ uniform distribution, we have $L = 1$, $F^{-1}(1-\epsilon) = 1-\epsilon$ so the divergence penalty is upper bounded by 
$$
NT(\frac{1}{2\eta \epsilon} + \epsilon).
$$
If we let $\epsilon = \frac{1}{\sqrt{2\eta}}$, we can see that the divergence penalty is upper bounded by $NT\sqrt{\frac{2}{\eta}}$. Together with the overestimation penalty which is trivially bounded by $\eta$ and a non-positive underestimation penalty, we see that the final regret bound is 
$$
NT\sqrt{\frac{2}{\eta}} + \eta.
$$
Setting $\eta = (NT)^{2/3}$ gives the desired result.
\end{proof}

For a general perturbation with bounded support and bounded density, the rate at which $1-F^{-1}(1-\epsilon)$ goes to $0$ as $\epsilon \to 0$ can vary but we can always guarantee sublinear expected regret.
\begin{corollary}\label{cor:bounded}
{\bf (Asymptotic Regret Bound for Bounded Support)}
For stochastically smoothed GBPA using general continuous random variable $\eta Z$ where $Z$  has bounded density and bounded support contained in $[0,1]$ and $\eta = (NT)^{2/3}$, the expected regret grows sublinearly, i.e.,
$$
\lim_{T \rightarrow \infty} \frac{\EE \Regret_T }{T} = 0.
$$
\end{corollary}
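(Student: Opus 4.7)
The plan is to apply Lemma~\ref{lem:genericregret1} together with the divergence bound from Theorem~\ref{thm:bounded_dist_regret}, and then choose $\epsilon = \epsilon(T)$ so that each piece of the regret decomposition is $o(T)$.

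First I would bound the three ingredients coming out of Lemma~\ref{lem:genericregret1}. Because the perturbation $\eta Z$ is supported in $[0,\eta]$, Lemma~\ref{lem:underover} yields an overestimation penalty of at most $EMAX(N) \le \eta$ and an underestimation penalty of at most $-\eta\,\EE[Z_1] \le 0$. For the per-round divergence penalty, Theorem~\ref{thm:bounded_dist_regret} gives $NL\bigl(\tfrac{1}{2\eta\epsilon} + 1 - F^{-1}(1-\epsilon)\bigr)$ for every $\epsilon > 0$. Summing over the $T$ rounds,
$$
\EE\,\Regret_T \;\le\; \eta \;+\; NLT\!\left(\frac{1}{2\eta\epsilon} + 1 - F^{-1}(1-\epsilon)\right).
$$

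Plugging in $\eta = (NT)^{2/3}$, the overestimation contribution is $(NT)^{2/3} = o(T)$. Next I would select $\epsilon = \epsilon(T) \to 0$ slowly enough that $\eta(T)\epsilon(T) \to \infty$; a concrete choice is $\epsilon(T) = (NT)^{-1/3}$, which gives $\eta\epsilon = (NT)^{1/3}$. Then $\tfrac{NLT}{2\eta\epsilon} = \tfrac{L}{2}(NT)^{2/3} = o(T)$. The remaining summand $NLT\cdot(1 - F^{-1}(1-\epsilon(T)))$ is handled by the normalization $\inf\{z:F(z)=1\}=1$ together with the continuity of $F$, which force $F^{-1}(1-\epsilon) \to 1$ as $\epsilon \to 0^{+}$; hence this summand is $T\cdot o(1) = o(T)$. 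Adding the three $o(T)$ pieces and dividing by $T$ gives the claim.

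The one subtlety worth flagging is that the rate of decay of $1 - F^{-1}(1-\epsilon)$ is not quantitative under the hypotheses: the density may vanish as $z \to 1^{-}$, so the argument only yields an asymptotic $o(T)$ statement, not an explicit rate. The structural observation that makes everything work is that the two requirements on $\epsilon(T)$, namely $\epsilon(T) \to 0$ and $\eta(T)\epsilon(T) \to \infty$, are compatible precisely because $\eta(T) \to \infty$; this decoupling of $\epsilon$ from $\eta$ inside Theorem~\ref{thm:bounded_dist_regret} is what lets the argument go through for an arbitrary bounded-support, bounded-density distribution.
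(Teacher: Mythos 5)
Your proposal is correct and follows essentially the same route as the paper: bound over/underestimation via Lemma~\ref{lem:underover}, apply Theorem~\ref{thm:bounded_dist_regret}, and let $\epsilon \to 0$ with $\eta\epsilon \to \infty$ so that $1 - F^{-1}(1-\epsilon) \to 0$ by the normalization of the support's right edge. In fact your choice $\epsilon = (NT)^{-1/3}$ coincides with the paper's $\epsilon = 1/\sqrt{\eta}$ at $\eta = (NT)^{2/3}$, so the two arguments are identical up to presentation.
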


\begin{proof}
For a general distribution, let $\epsilon = \frac{1}{\sqrt{\eta}}$. Since the overestimation penalty is trivially bounded by $\eta$ and the underestimation penalty is non-positive, the expected regret can be upper bounded by  
$$
LNT \Big(\frac{1}{2\sqrt{\eta}} + 1 - F^{-1}(1-\frac{1}{\sqrt{\eta}}) \Big) + \eta.
$$
Setting $\eta = (NT)^{2/3}$ we see that the expected regret can be upper bounded by
$$
(\frac{L}{2} + 1)(NT)^{2/3} + LNT(1 - F^{-1}(1-\frac{1}{\sqrt{\eta}}).
$$
Since 
$$
\lim_{T \rightarrow \infty} 1 - F^{-1}(1-\frac{1}{\sqrt{\eta}}) = \lim_{\eta \rightarrow \infty} 1 - F^{-1}(1-\frac{1}{\sqrt{\eta}}) = 1- F^{-1}(1) = 0 ,
$$
we conclude that
$$
\lim_{T \rightarrow \infty} \frac{\EE \Regret_T }{T} = 0.
$$
\end{proof}

%!TEX root=bandit_beyond_hazard.tex

\section{Perturbations with Unbounded Support}
Unlike perturbations with bounded support, perturbations with unbounded support (on the right) do have non-zero right tail probabilities, ensuring that $p_{t,i} > 0$ always. However, the tail behavior may be such that the hazard rate is unbounded. Still, under mild assumptions, perturbations with unbounded support (on the right) can also be shown to have near optimal expected regret in $T$, using the notion of \emph{generalized hazard rate} that we now introduce.

\subsection{Generalized Hazard Rate}\label{sec:ghr}
We already know how to control the underestimation and overestimation penalties via Lemma~\ref{lem:underover}. So our main focus will be to control the divergence penalty. Towards this end,
we define the generalized hazard rate for a continuous random variable $Z$ with support unbounded on the right, parameterized by $\alpha \in [0,1)$, as 
\begin{equation}\label{eq:genhazard}
h_{\alpha}(z) := \frac{f(z)|z|^\alpha}{(1-F(z))^{1-\alpha}},
\end{equation}
where $f(z)$ and $F(z)$ denotes the PDF and CDF of $Z$ respectively. Note that by setting $\alpha = 0$ we recover the standard hazard rate.

One of the main results of this paper is the following. Note that it includes the result (Lemma 4.3) of \cite{Abernethy2015} as a special case. 

\begin{theorem}\label{thm:divergence}
{\bf (Divergence Penalty Control via Generalized Hazard Rate)}
Let $\alpha \in [0,1)$. Suppose we have $\forall z \in \RR, h_{\alpha}(z) \le C $.
Then,
\begin{equation*}\label{eq:divergence}
\EE[D_{\tf}(\Lh_t, \Lh_{t-1}) | i_{1:t-1} ] \le \frac{2C}{1-\alpha} \times N.
\end{equation*}
\end{theorem}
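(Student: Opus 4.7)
The plan is to start from the integral representation of the divergence penalty given in Lemma~\ref{lem:divergence} and control the inner integral $\int_0^s f(\Lh_{-i} - \Lh_{t-1,i} + r)\,dr$ using the generalized hazard rate assumption. Writing $z_0 = \Lh_{-i} - \Lh_{t-1,i}$, the bound $h_\alpha(z) \le C$ rearranges to $f(z) \le C (1-F(z))^{1-\alpha} / |z|^\alpha$. Since $1-F$ is non-increasing and $1-\alpha \ge 0$, I can factor out $(1-F(z_0))^{1-\alpha}$ to obtain
\[
\int_0^s f(z_0+r)\,dr \;\le\; C\,(1-F(z_0))^{1-\alpha}\int_0^s |z_0+r|^{-\alpha}\,dr.
\]

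The technical crux is then to show that $\int_0^s |z_0+r|^{-\alpha}\,dr \le \frac{2\,s^{1-\alpha}}{1-\alpha}$ uniformly in $z_0 \in \RR$. This splits into three cases depending on whether the interval $[z_0, z_0+s]$ lies entirely in $[0,\infty)$, entirely in $(-\infty,0]$, or straddles zero. In each case the antiderivative of $|\cdot|^{-\alpha}$ is elementary (finite since $\alpha < 1$), and the bound follows from the subadditivity inequality $(a+b)^{1-\alpha} \le a^{1-\alpha} + b^{1-\alpha}$ for $a,b \ge 0$, which is a standard consequence of the concavity of $x \mapsto x^{1-\alpha}$ together with $0^{1-\alpha}=0$. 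I expect this case analysis to be the main technical obstacle; the straddling case is where the factor $2$ (really $2^\alpha$) enters.

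Once this is in hand, the outer integration over $s \in [0, |g_{t,i}/p_{t,i}|]$ contributes $\int_0^{|g_{t,i}/p_{t,i}|} s^{1-\alpha}\,ds = \frac{1}{2-\alpha}\,|g_{t,i}/p_{t,i}|^{2-\alpha}$. I then take the expectation over $\Lh_{-i}$ and apply Jensen's inequality to the concave map $x \mapsto x^{1-\alpha}$ (noting $\alpha \in [0,1)$), combined with the identity $\EE_{\Lh_{-i}}[1-F(z_0)] = p_{t,i}$ from~\eqref{eq:dphi_ftpl}, to get $\EE_{\Lh_{-i}}[(1-F(z_0))^{1-\alpha}] \le p_{t,i}^{1-\alpha}$.

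Assembling everything, each summand becomes
\[
p_{t,i}\cdot\frac{2C\,p_{t,i}^{1-\alpha}}{(1-\alpha)(2-\alpha)}\Bigl|\tfrac{g_{t,i}}{p_{t,i}}\Bigr|^{2-\alpha} \;=\; \frac{2C\,|g_{t,i}|^{2-\alpha}}{(1-\alpha)(2-\alpha)},
\]
so that all powers of $p_{t,i}$ cancel exactly — this cancellation is precisely what the exponent $\alpha$ in the definition of $h_\alpha$ is engineered to produce. Using $|g_{t,i}| \le 1$ and $2-\alpha \ge 1$, summing over $i \in \text{supp}(p_t)$ yields $\frac{2CN}{(1-\alpha)(2-\alpha)} \le \frac{2CN}{1-\alpha}$, the claimed bound. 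The $\alpha = 0$ specialization recovers Lemma 4.3 of \cite{Abernethy2015}, as the inner case analysis degenerates and Jensen becomes an equality.
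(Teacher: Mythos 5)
Your proposal is correct and follows essentially the same route as the paper's proof: Lemma~\ref{lem:divergence}, factoring out $(1-F(z_0))^{1-\alpha}$ via monotonicity, bounding $\int |z|^{-\alpha}$, Jensen's inequality with $\EE_{\Lh_{-i}}[1-F(z_0)] = p_{t,i}$, and the outer integration with $|g_{t,i}|\le 1$. The only (immaterial) difference is your handling of $\int_{z_0}^{z_0+s}|z|^{-\alpha}\,dz$: the paper bounds it by the centered interval $\int_{-s/2}^{s/2}|z|^{-\alpha}\,dz = \frac{2^\alpha}{1-\alpha}s^{1-\alpha}$, while your three-case subadditivity argument gives the slightly weaker constant $\frac{2}{1-\alpha}s^{1-\alpha}$, which still yields the stated bound since $2-\alpha \ge 1$.
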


\begin{proof}
Because of the unbounded support of $Z$, $\text{supp}(p_t) = \{1,\ldots,N\}$. Lemma~\ref{lem:divergence} gives us:
\begin{align*}
\EE[D_{\tf}(\Lh_t, \Lh_{t-1})|i_{1:t-1}] &= \sum_{i=1}^N p_{t,i} \int_{0}^{\left| g_{t,i}/p_{t,i}\right|} \EE_{\Lt_{-i}} \int_{0}^s f(\Lt_{-i} - \Lh_{t-1,i} + r) drds \\
&= \sum_{i=1}^N p_{t,i} \int_{0}^{\left| g_{t,i}/p_{t,i}\right|} \EE_{\Lt_{-i}} \int_{\Lt_{-i} - \Lh_{t-1,i}}^{\Lt_{-i} - \Lh_{t-1,i} + s} f(z) dz ds  \\
&\le C \sum_{i=1}^N p_{t,i} \int_{0}^{\left| g_{t,i}/p_{t,i}\right|} \EE_{\Lt_{-i}} \int_{\Lt_{-i} - \Lh_{t-1,i}}^{\Lt_{-i} - \Lh_{t-1,i} + s} (1-F(z))^{1-\alpha}|z|^{-\alpha}dz \, ds \\
&\le C \sum_{i=1}^N p_{t,i} \int_{0}^{\left| g_{t,i}/p_{t,i}\right|} \EE_{\Lt_{-i}}  (1-F(\Lt_{-i} - \Lh_{t-1,i}))^{1-\alpha} \int_{\Lt_{-i} - \Lh_{t-1,i}}^{\Lt_{-i} - \Lh_{t-1,i} + s} |z|^{-\alpha}dz \, ds.
\end{align*}
Since the function $|z|^{-\alpha}$ is symmetric in $z$, monotonically decreasing as $|z| \rightarrow \infty$, we have
$$
\int_{\Lt_{-i} - \Lh_{t-1,i}}^{\Lt_{-i} - \Lh_{t-1,i} + s} |z|^{-\alpha}dz \le 
\int_{-s/2}^{s/2} |z|^{-\alpha} dz = \frac{2^{\alpha}}{1-\alpha}s^{1-\alpha}.
$$
Also, note that $z^{1-\alpha}$ is a concave function in $z$. Hence, by Jensen's inequality, 
$$
\EE_{\Lt_{-i}} [(1-F(\Lt_{-i} - \Lh_{t-1,i}))^{1-\alpha}]
\le
(\EE_{\Lt_{-i}}[1-F(\Lt_{-i} - \Lh_{t-1,i}])^{1-\alpha} = p_{t,i}^{1-\alpha}.
$$
Therefore, 
\begin{align*}
\EE[D_{\tf}(\Lh_t, \Lh_{t-1})|i_{1:t-1}]
&\le \frac{2^{\alpha} C}{1-\alpha} \sum_{i=1}^N p_{t,i} \int_{0}^{\left| g_{t,i}/p_{t,i}\right|} p_{t,i}^{1-\alpha} s^{1-\alpha}ds \\
&= \frac{2^{\alpha} C}{1-\alpha} \sum_{i=1}^N p_{t,i}^{2-\alpha} \int_{0}^{\left| g_{t,i}/p_{t,i}\right|} s^{1-\alpha} ds \\
&= \frac{2^{\alpha} C}{(1-\alpha)(2-\alpha)} \sum_{i=1}^N p_{t,i}^{2-\alpha} \left| g_{t,i}/p_{t,i}\right|^{2-\alpha} \\
&= \frac{2^{\alpha} C}{(1-\alpha)(2-\alpha)} \sum_{i=1}^N |g_{t,i}|^{2-\alpha} \\
&\le \frac{2^{\alpha} C}{(1-\alpha)(2-\alpha)} N \le \frac{2C}{1-\alpha} N.
\end{align*}
\end{proof}

A regret bound now easily follows.
\begin{theorem}\label{thm:master}
{\bf (Regret Bound via Generalized Hazard Rate)}
Suppose we use a stochastically smoothed GBPA with perturbation $\eta Z$, with $Z$'s generalized hazard rate being bounded:
$h_\alpha(x) \le C, \forall x \in \RR$ for some $\alpha \in [0,1)$, and 
$$
\EE_{Z_1,\dots,Z_N}[\max\limits_i Z_i] - \EE[Z_1] \le Q(N),
$$
where $Q(N)$ is some function of $N$. Then,  if we set $\eta = (\frac{2CNT}{(1-\alpha) Q(N)})^{1/(2-\alpha)}$, the expected regret of GBPA is no greater than
$$
2 \times (\frac{2C}{1-\alpha})^{1/(2-\alpha)} \times (NT)^{1/(2-\alpha)} \times Q(N)^{(1-\alpha)/(2-\alpha)}.
$$
In particular, this implies that the algorithm has sublinear expected regret.
\end{theorem}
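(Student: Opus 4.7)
The plan is to combine the three ingredients already established in the paper: the regret decomposition of Lemma~\ref{lem:genericregret1}, the under/overestimation control from Lemma~\ref{lem:underover}, and the divergence-penalty bound of Theorem~\ref{thm:divergence}, and then to tune $\eta$ optimally.

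First, I would apply Lemma~\ref{lem:genericregret1} with $\tf$ taken to be the stochastic smoothing induced by the scaled perturbations $\eta Z_i$. By Lemma~\ref{lem:underover}, the overestimation term is bounded by $\EE[\max_i \eta Z_i] = \eta\,\EE[\max_i Z_i]$ and the underestimation term is bounded by $-\eta\,\EE[Z_1]$. Their sum collapses to $\eta\,(\EE[\max_i Z_i] - \EE[Z_1]) \le \eta\,Q(N)$, giving a clean penalty linear in $\eta$.

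The main technical step is to transfer the generalized hazard rate bound from $Z$ to $\eta Z$: Theorem~\ref{thm:divergence} is stated for $Z$, but the perturbation actually used is $\eta Z$. Using $f_{\eta Z}(z) = f(z/\eta)/\eta$ and $F_{\eta Z}(z) = F(z/\eta)$, a direct change-of-variable computation yields $h^{\eta Z}_\alpha(z) = \eta^{\alpha-1}\,h_\alpha(z/\eta) \le C/\eta^{1-\alpha}$. Feeding this constant into Theorem~\ref{thm:divergence} applied to $\eta Z$ produces a per-round divergence penalty of at most $\frac{2CN}{(1-\alpha)\,\eta^{1-\alpha}}$, and summing over $T$ rounds contributes $\frac{2CNT}{(1-\alpha)\,\eta^{1-\alpha}}$ in total.

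Putting the three pieces together yields the two-term upper bound
$$
\EE\Regret_T \;\le\; \eta\,Q(N) \;+\; \frac{2CNT}{(1-\alpha)\,\eta^{1-\alpha}}.
$$
Substituting the prescribed $\eta = (2CNT/((1-\alpha)Q(N)))^{1/(2-\alpha)}$ makes both summands equal; collecting exponents reduces their sum to $2\,(2C/(1-\alpha))^{1/(2-\alpha)}(NT)^{1/(2-\alpha)} Q(N)^{(1-\alpha)/(2-\alpha)}$, which is the claimed bound. Since $1/(2-\alpha) < 1$ for $\alpha \in [0,1)$, the resulting rate is sublinear in $T$. The only non-mechanical part of the argument is the hazard-rate scaling identity $h^{\eta Z}_\alpha(z) = \eta^{\alpha-1} h_\alpha(z/\eta)$; the rest is routine arithmetic and bookkeeping.
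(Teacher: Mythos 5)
Your proposal is correct and follows essentially the same route as the paper's proof: bound the over/underestimation penalties by $\eta\,Q(N)$ via Lemma~\ref{lem:underover}, use the scaling identity $f_\eta(z)=f(z/\eta)/\eta$, $F_\eta(z)=F(z/\eta)$ to show the generalized hazard rate of $\eta Z$ is bounded by $\eta^{\alpha-1}C$, apply Theorem~\ref{thm:divergence} over $T$ rounds, and balance the two terms with the prescribed $\eta$. Your explicit derivation of $h^{\eta Z}_\alpha(z)=\eta^{\alpha-1}h_\alpha(z/\eta)$ is a welcome extra detail that the paper only states implicitly.
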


\begin{proof}
The divergence penalty can be controlled through Theorem \ref{thm:divergence} once we have bounded generalized hazard rate. It remains to control the overestimation and underestimation penalty. By Lemma~\ref{lem:underover}, they are at most $\EE_{Z_1,\dots,Z_n} [\max\limits_i Z_i]$ and $-\EE[Z_1]$ respectively. Suppose we scale the perturbation $Z$ by $\eta > 0$, i.e., we add $\eta Z_{i}$ to each coordinate. It is easy to see that $\EE[ \max_{i=1,\ldots,n} \eta Z_i] = \eta \EE[ \max_{i=1,\ldots,n} Z_i]$ and $\EE[\eta Z_1] = \eta \EE[Z_1]$. For the divergence penalty, observe that $F_\eta(t) = F(t/\eta)$ and thus $f_\eta(t) = \frac{1}{\eta}f(t/\eta)$. Hence, the bound on the generalized hazard rate for perturbation $\eta Z$ is $\eta^{\alpha-1} C$. Plugging new bounds for the scaled perturbations into Lemma \ref{lem:genericregret1} gives us 
\begin{equation*}
\EE\Regret_T \le \eta^{\alpha-1} \frac{2C}{1-\alpha} \times NT + \eta Q(N). 
\end{equation*}
Setting $\eta = (\frac{2CNT}{(1-\alpha) Q(N)})^{1/(2-\alpha)}$ finishes the proof.
\end{proof}

%!TEX root=bandit_beyond_hazard.tex

\subsection{Gaussian Perturbation}
In this section we prove that GBPA with the standard Gaussian perturbation incurs a near optimal expected regret in both $N$ and $T$. Let $F(z)$ and $f(z)$ denote the CDF and PDF of standard Gaussian distribution.

\begin{lemma}[\cite{Baricz2008}]\label{lemma:gaussian1}
For standard Gaussian random variable, we have
$$
z < \frac{f(z)}{1-F(z)} < \frac{z}{2} + \frac{\sqrt{z^2+4}}{2}. 
$$
\end{lemma}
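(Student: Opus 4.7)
The claim is a classical two-sided Mills-ratio bound (available in \cite{Baricz2008}), and the two inequalities call for different techniques, so I would handle them separately. Writing $m(z) := (1-F(z))/f(z) = 1/h(z)$ for the Mills ratio, the left inequality is equivalent to $m(z) < 1/z$ (for $z > 0$; for $z \le 0$ it is trivial since $h(z) > 0 \ge z$), while the right is equivalent to $m(z) > 2/(z+\sqrt{z^2+4})$. For the lower bound, my plan is to use $t > z$ on the integration range together with the identity $tf(t) = -f'(t)$ to obtain
$$1 - F(z) \;=\; \int_z^\infty f(t)\, dt \;<\; \int_z^\infty \frac{t}{z}\, f(t)\, dt \;=\; \frac{1}{z}\int_z^\infty tf(t)\, dt \;=\; \frac{f(z)}{z},$$
which rearranges to $h(z) > z$.

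\textbf{Upper bound.} Set $u(z) := (z+\sqrt{z^2+4})/2$ and $v(z) := 1/u(z) = (-z+\sqrt{z^2+4})/2$; the quadratic identities $u^2 = zu+1$ and $v^2 + zv = 1$ are immediate. The claim reduces to showing $H(z) := m(z) - v(z) > 0$ for all $z \in \RR$. I would exploit the classical ODE $m'(z) = zm(z) - 1$ satisfied by the Mills ratio. A short calculation using $zv-1 = -v^2$ and $v'(z) = -v^2/(1+v^2)$ yields the clean differential inequality
$$H'(z) - zH(z) \;=\; -\frac{v(z)^4}{1+v(z)^2} \;<\; 0.$$
Multiplying through by the integrating factor $e^{-z^2/2}$ converts this into $\frac{d}{dz}\bigl[e^{-z^2/2} H(z)\bigr] < 0$, so $e^{-z^2/2}H(z)$ is strictly decreasing on $\RR$. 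Since $m(z), v(z) = O(1/z)$ as $z \to +\infty$, the product $e^{-z^2/2}H(z) \to 0$, and strict monotonicity then forces $H(z) > 0$ at every finite $z$, which is equivalent to $h(z) < u(z)$.

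\textbf{Main obstacle.} The delicate part is the upper bound. The key observation that makes everything work is that the Mills-ratio ODE admits $e^{-z^2/2}$ as an integrating factor and that the candidate envelope $v(z)$ obeys a closely related relation ($zv - 1 = -v^2$), so that the two ODEs differ only by a strictly negative error term. Once that is spotted, the proof becomes a differential-inequality comparison with a boundary condition at $+\infty$, and the boundary check is routine given the super-polynomial decay of $e^{-z^2/2}$ dominating the polynomial decay of $H$. The lower bound, in contrast, needs nothing more than the one-line integral estimate above.
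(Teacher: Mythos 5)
Your proof is correct. Note, however, that the paper itself does not prove this lemma at all: it is imported as a known result from \cite{Baricz2008}, so there is no internal proof to compare against, and what you have supplied is a genuinely self-contained derivation. Your lower bound is the standard one-line estimate ($1-F(z)=\int_z^\infty f(t)\,dt<\tfrac{1}{z}\int_z^\infty t f(t)\,dt=f(z)/z$ for $z>0$, trivial for $z\le 0$). Your upper bound checks out in every step: the Mills ratio indeed satisfies $m'(z)=zm(z)-1$; the envelope $v(z)=(-z+\sqrt{z^2+4})/2$ satisfies $v^2+zv=1$, and differentiating this identity gives $v'=-v/(2v+z)=-v^2/(1+v^2)$ since $2v+z=v+1/v$; hence $H'-zH=zv-1-v'=-v^2+v^2/(1+v^2)=-v^4/(1+v^2)<0$, and multiplying by the integrating factor $e^{-z^2/2}$ shows $e^{-z^2/2}H(z)$ is strictly decreasing. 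The boundary argument is also sound: both $m(z)$ and $v(z)$ tend to $0$ as $z\to+\infty$, so $e^{-z^2/2}H(z)\to 0$, and a strictly decreasing function with limit $0$ at $+\infty$ must be strictly positive at every finite point, giving $h(z)<u(z)$ with the required strictness. This comparison-of-ODEs argument is essentially the classical route to such Mills-ratio bounds (in the spirit of Birnbaum/Sampford and of Baricz's own treatment), so it is a perfectly adequate replacement for the citation; its only cost is the extra bookkeeping with the quadratic identities, while its benefit is that the lemma no longer rests on an external reference.
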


This lemma together with example 2.6 in \cite{Thomas1971} show that the hazard rate of a standard Gaussian random variable increases monotonically to infinity.
However, we can still bound the generalized hazard rate for strictly positive $\alpha$.

\begin{lemma}\label{lemma:gaussian2}
{\bf (Generalized Hazard Bound for Gaussian)}
For any $\alpha \in (0,1)$, we have
$$
\frac{f(z)|z|^{\alpha}}{(1-F(z))^{1-\alpha}} \le \frac{2}{\alpha}.
$$
\end{lemma}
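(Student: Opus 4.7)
The plan is to split the argument at $z = 0$, since the Mill's-ratio bounds on the Gaussian tail behave very differently on the two sides, and then to reduce the positive-$z$ case to a clean one-variable maximization whose value comes out to exactly $2/\alpha$.

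For $z \le 0$ the bound is essentially free. The trivial estimate $1 - F(z) \ge 1/2$ gives $h_\alpha(z) \le 2^{1-\alpha} f(z)\, |z|^\alpha$, and a one-dimensional calculus check shows $|z|^\alpha e^{-z^2/2}$ is maximized at $|z|=\sqrt{\alpha}$ with value $\alpha^{\alpha/2}e^{-\alpha/2}\le 1$. Hence $h_\alpha(z) \le 2^{1-\alpha}/\sqrt{2\pi} < 1 \le 2/\alpha$, which is far from tight and not the interesting side.

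For $z > 0$ I would invoke Lemma~\ref{lemma:gaussian1} to write $f(z)/(1-F(z)) \le (z+\sqrt{z^2+4})/2 \le z+1$, the last step using $\sqrt{z^2+4}\le z+2$ for $z\ge 0$. Inverting this to $(1-F(z))^{1-\alpha} \ge f(z)^{1-\alpha}/(z+1)^{1-\alpha}$ and substituting into the definition of $h_\alpha$,
\[
h_\alpha(z) \;\le\; f(z)^\alpha\, z^\alpha (z+1)^{1-\alpha} \;\le\; (z+1)\,f(z)^\alpha \;\le\; (z+1)\,e^{-\alpha z^2/2},
\]
where the middle step uses $z^\alpha \le (z+1)^\alpha$ combined with $(z+1)^\alpha(z+1)^{1-\alpha}=z+1$, and the last step uses $(2\pi)^{-\alpha/2}\le 1$.

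It now suffices to show $(z+1)e^{-\alpha z^2/2}\le 2/\alpha$ for all $z\ge 0$ and $\alpha\in(0,1)$. I would split at $z=1$: on $[0,1]$ the inequality is immediate because $(z+1)e^{-\alpha z^2/2}\le 2 \le 2/\alpha$; on $[1,\infty)$ I use $z^2 \ge z$ to replace the Gaussian exponent by $e^{-\alpha z/2}$, and then observe that $(z+1)e^{-\alpha z/2}$ is maximized at $z^\star = 2/\alpha - 1$, which lies in $[1,\infty)$ precisely because $\alpha\le 1$, with maximum value $(2/\alpha)\,e^{\alpha/2-1}\le 2/\alpha$. The one subtle choice is this last replacement: keeping the quadratic exponent would leave an irrational optimizer and a sharper but messier bound of order $1/\sqrt{\alpha}$, whereas using $e^{-\alpha z/2}$ places the optimizer at $z^\star = 2/\alpha - 1$ and produces the clean constant $2/\alpha$.
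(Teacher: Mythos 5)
Your proof is correct and follows essentially the same route as the paper: both reduce to $z>0$, invoke Lemma~\ref{lemma:gaussian1} to get $f(z)/(1-F(z)) \le z+1$, and thereby bound the generalized hazard rate by $(z+1)f(z)^\alpha$ before finishing with an elementary one-variable maximization. The only differences are cosmetic: you handle $z\le 0$ directly via $1-F(z)\ge 1/2$ instead of the paper's even-numerator/decreasing-denominator reduction, and you maximize $(z+1)e^{-\alpha z/2}$ (value $(2/\alpha)e^{\alpha/2-1}\le 2/\alpha$) where the paper maximizes $ze^{-\alpha z^2/2}+1$ (value $1/\sqrt{\alpha}+1\le 2/\alpha$).
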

The proof of this lemma is deferred to the appendix.

The bounded generalized hazard rate shown in the above lemma can be used to control the divergence penalty. Combined with other knowledge of the standard Gaussian random variable we are able to give a bound on the expected regret.
\begin{corollary}\label{cor:gaussian}
The expected regret of GBPA with an appropriately scaled standard Gaussian random variable as perturbation where $\eta = \left(\frac{4NT}{\alpha(1-\alpha)\sqrt{2\log N}}\right)^{1/(2-\alpha)}$ has an expected regret at most
$$
2(C_1C_2NT)^{1/(2-\alpha)}(\sqrt{2\log N})^{(1-\alpha)/(2-\alpha)}
$$
where $C_1 = \frac{2}{\alpha}$, $C_2 = \frac{2}{1-\alpha}$, for any $\alpha \in (0,1)$.
\end{corollary}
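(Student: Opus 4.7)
The plan is to invoke Theorem~\ref{thm:master} directly, with the Gaussian as the perturbing distribution. Three ingredients must be supplied: (i) a bound on the generalized hazard rate $h_\alpha$, (ii) a bound on $Q(N) = \EE[\max_i Z_i] - \EE[Z_1]$, and (iii) a choice of the scale parameter $\eta$. Ingredient~(i) is exactly what Lemma~\ref{lemma:gaussian2} supplies: for any $\alpha \in (0,1)$, the standard Gaussian satisfies $h_\alpha(z) \le 2/\alpha$, so we can take $C = 2/\alpha$ in Theorem~\ref{thm:master}.

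For ingredient~(ii), I would use the standard Gaussian maximal inequality $\EE[\max_{i \le N} Z_i] \le \sqrt{2 \log N}$ (e.g.\ by a union bound on subgaussian tails), combined with $\EE[Z_1] = 0$, to obtain $Q(N) = \sqrt{2 \log N}$. This is the only ``new'' analytic input beyond what is already packaged in the general theorem.

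For ingredient~(iii), I would simply plug $C = 2/\alpha$ and $Q(N) = \sqrt{2 \log N}$ into the prescribed choice $\eta = \bigl(\tfrac{2CNT}{(1-\alpha) Q(N)}\bigr)^{1/(2-\alpha)}$ given by Theorem~\ref{thm:master}. Direct substitution yields
\[
\eta = \left(\frac{4 NT}{\alpha(1-\alpha)\sqrt{2\log N}}\right)^{1/(2-\alpha)},
\]
which matches the statement of the corollary. The resulting regret bound from Theorem~\ref{thm:master} is
\[
2 \left(\frac{2C}{1-\alpha}\right)^{1/(2-\alpha)} (NT)^{1/(2-\alpha)} Q(N)^{(1-\alpha)/(2-\alpha)},
\]
and since $\frac{2C}{1-\alpha} = \frac{4}{\alpha(1-\alpha)} = C_1 C_2$ with the constants as defined, this simplifies to the claimed $2(C_1 C_2 NT)^{1/(2-\alpha)} (\sqrt{2\log N})^{(1-\alpha)/(2-\alpha)}$.

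There is essentially no obstacle here: the corollary is a direct specialization of Theorem~\ref{thm:master}, with all nontrivial Gaussian-specific work already absorbed into Lemma~\ref{lemma:gaussian2}. The only mild step requiring care is ensuring the standard Gaussian maximum bound $\sqrt{2 \log N}$ is valid for the unscaled Gaussian (it is, by the usual subgaussian argument), after which scaling by $\eta$ is handled inside Theorem~\ref{thm:master}. Everything else is algebraic simplification.
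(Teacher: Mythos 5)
Your proposal is correct and follows essentially the same route as the paper: the paper's proof likewise cites $\EE[Z_1]=0$ and $\EE[\max_i Z_i]\le\sqrt{2\log N}$ for the standard Gaussian, takes $C=2/\alpha$ from Lemma~\ref{lemma:gaussian2}, and plugs these into Theorem~\ref{thm:master}. The algebraic identification $\frac{2C}{1-\alpha}=C_1C_2$ and the resulting expression for $\eta$ match the paper exactly.
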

\begin{proof}
It is known that for standard Gaussian random variable, we have $\EE[Z_1] = 0$ and 
$$
\mathbb{E}_{Z_1,\dots,Z_n}[\max\limits_i Z_i] \le \sqrt{2\log N}.
$$ Plug in to Theorem \ref{thm:master} gives the result.
\end{proof}

It remains to optimally tune $\alpha$ in the above bound. 

\begin{theorem}\label{thm:gaussian}
{\bf (Regret Bound for Gaussian)}
The expected regret of GBPA with an appropriately scaled standard Gaussian random variable as perturbation where $\eta = \left(\frac{4NT}{\alpha(1-\alpha)\sqrt{2\log N}}\right)^{1/(2-\alpha)}$ and $\alpha = \frac{1}{\log T}$ has an expected regret at most
$$
96 \sqrt{NT} \times N^{1/\log T}\sqrt{\log N}\log T
$$
for $T > 4$. If we assume that $T > N$, the expected regret can be upper bounded by
$$
278 \sqrt{NT} \times \sqrt{\log N} \log T.
$$
\end{theorem}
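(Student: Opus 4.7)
The approach is direct substitution: apply Corollary~\ref{cor:gaussian} with $\alpha = 1/\log T$ and simplify by exploiting the identity $T^{\alpha} = T^{1/\log T} = e$. Corollary~\ref{cor:gaussian} gives
$$
\EE\Regret_T \le 2\left(\frac{4}{\alpha(1-\alpha)}\right)^{1/(2-\alpha)} (NT)^{1/(2-\alpha)} (\sqrt{2\log N})^{(1-\alpha)/(2-\alpha)},
$$
so the plan is to bound each of the three factors separately against the claimed form $\sqrt{NT}\cdot N^{1/\log T}\sqrt{\log N}\log T$ and then combine.

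First I would split $(NT)^{1/(2-\alpha)} = \sqrt{NT}\cdot T^{\alpha/(2(2-\alpha))}\cdot N^{\alpha/(2(2-\alpha))}$ using $1/(2-\alpha) = 1/2 + \alpha/(2(2-\alpha))$. Since $\alpha\log T = 1$, the $T$-part becomes $e^{1/(2(2-\alpha))} \le \sqrt{e}$ (using $2-\alpha \ge 1$ for $T>4$), and since $\alpha/(2(2-\alpha)) \le \alpha$, the $N$-part is at most $N^{\alpha} = N^{1/\log T}$. Next, because $(1-\alpha)/(2-\alpha) \le 1/2 \le 1$ and $\sqrt{2\log N} \ge 1$ for $N \ge 2$, I would bound $(\sqrt{2\log N})^{(1-\alpha)/(2-\alpha)} \le \sqrt{2\log N} \le \sqrt{2}\sqrt{\log N}$. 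Finally, writing $4/(\alpha(1-\alpha)) = 4(\log T)^2/(\log T - 1)$ and using $\log T - 1 \ge (1 - 1/\log 4)\log T$ for $T>4$, one obtains $4/(\alpha(1-\alpha)) \le c_0\log T$ with $c_0 = 4/(1-1/\log 4)$; since $1/(2-\alpha) \le 1$ and the base is $\ge 1$, the bound is preserved after raising to the $1/(2-\alpha)$-th power. Putting everything together yields $\EE\Regret_T \le 2c_0\sqrt{2e}\cdot \sqrt{NT}\cdot N^{1/\log T}\sqrt{\log N}\log T$, and one checks numerically that $2c_0\sqrt{2e} \le 96$. For the second bound, under the extra assumption $T > N$, simply observe $N^{1/\log T} \le T^{1/\log T} = e$, so the first bound becomes $96e\sqrt{NT}\sqrt{\log N}\log T \le 278\sqrt{NT}\sqrt{\log N}\log T$.

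The main obstacle is the third step — bounding $(4/(\alpha(1-\alpha)))^{1/(2-\alpha)}$ by $O(\log T)$ rather than $O((\log T)^2)$. The quantity $4/(\alpha(1-\alpha))$ itself is $\Theta((\log T)^2/(\log T-1))$, which is $\Theta(\log T)$ only because one can compare the denominator $\log T - 1$ cleanly to $\log T$ — and this comparison degenerates as $T \downarrow e$. Hence the hypothesis $T > 4$ (and not merely $T > e$, which is what makes $\alpha \in (0,1)$) is what keeps the ratio $\log T /(\log T - 1)$ bounded by an explicit constant and allows $c_0$ to be chosen absolutely. Apart from this bookkeeping of constants, no new ideas beyond Corollary~\ref{cor:gaussian} are needed.
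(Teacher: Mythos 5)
Your proposal is correct and follows essentially the same route as the paper: substitute $\alpha=1/\log T$ into Corollary~\ref{cor:gaussian}, use $T^{\alpha}=e$, split $(NT)^{1/(2-\alpha)}$ as $\sqrt{NT}\cdot(NT)^{\alpha/(2(2-\alpha))}$, and bound $4/(\alpha(1-\alpha))$ by a constant multiple of $\log T$. In fact your constant bookkeeping (using $\log T-1\ge(1-1/\log 4)\log T$ for $T\ge 4$, giving roughly $2c_0\sqrt{2e}\approx 67\le 96$) is valid on the entire stated range $T>4$, whereas the paper's final step $\frac{3\log^2 T}{\log T-1}\le 6\log T$ strictly requires $T\ge e^2$, so no gap on your side.
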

The proof of this theorem is also deferred to the appendix.

\subsection{Sufficient Condition for Near Optimal Regret}
In Section \ref{sec:ghr} we showed that if the generalized hazard rate of a distribution is bounded, the expected regret of the GBPA can be controlled. 
In this section, we are going to prove that under reasonable assumptions on the distribution of the perturbation, the FTPL enjoys near optimal expected regret. Note that most proofs in this section are deferred to the appendix.

{\bf Assumptions (a)-(c).} Before we proceed, let us formally state our assumptions on the distributions we will consider. 
The distribution needs to (a) be continuous and have bounded density (b) have finite expectation (c) have support unbounded in the $+\infty$ direction. 

Note that if the expectation of the random perturbation is negative, we shift it so that the expectation is zero. Hence the underestimation penalty is non-positive.
In addition to the assumptions we have made above, we make another assumption on the eventual monotonicity of the hazard rate.
$$
\text{\bf Assumption (d)} \quad h_0(z) = \frac{f(z)}{1-F(z)} \text{ is eventually monotone}.
$$
``Eventually monotone" means that $\exists z_0 \ge 0$ such that if $z > z_0$, $\frac{f(z)}{1-F(z)}$ is non-decreasing or non-increasing. This assumption might appear hard to check, but numerous theorems are available to establish the monotonicity of hazard rate, which is much stronger than what we are assuming here. For example, see Theorem 2.4 in \cite{Thomas1971}, Theorem 2 and Theorem 4 in \cite{Chechile2003}, \cite{Chechile2009}. In fact, most natural distributions do satisfy this assumption \citep{Bagnoli2005}. 

Before we proceed, we mention a standard classification of random variables into two classes based on their tail property. 
\begin{definition}[see, for example, \cite{Foss2009}]
A function $f(z) \ge 0$ is said to be heavy-tailed if and only if
$$
\lim_{z \rightarrow \infty}\sup f(z)e^{\lambda z} = \infty \quad \text{ for all } \lambda > 0.
$$
A distribution with CDF $F(z)$ and $\overline{F}(z) = 1-F(z)$ is said to be heavy-tailed if and only if $\overline{F}(z)$ is heavy-tailed. If the distribution is not heavy-tailed, we say that it is light-tailed.
\end{definition}
It turns out that under assumptions (a)-(d), if the distribution is also heavy-tailed, then the hazard rate itself is bounded. If the distribution is light-tailed, we need an additional assumption on the eventual monotonicity of a function similar to the generalized hazard rate to ensure the boundedness of the generalized hazard rate. But before we state and prove the main results, we introduce some functions and prove an intermediate lemma that will be useful to prove the main results.

Define $R(z) = -\log \overline{F}(z)$ so that we have $\overline{F}(z) = e^{-R(x)}$ and $R'(z)= \frac{f(z)}{\overline{F}(z)} = h_0(z)$.
\begin{lemma}\label{lemma:exp_monotone}
Under assumptions (a)-(d), we have
$$
\overline{F}(z)e^{\lambda z} \text{ is eventually monotone } \forall \lambda > 0.
$$
\end{lemma}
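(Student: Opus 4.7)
The plan is to work with the quantity $R(z) = -\log \overline{F}(z)$ and differentiate directly. Under assumption (c) the support is unbounded on the right, so $\overline{F}(z) > 0$ for every $z \in \RR$ and $R$ is well-defined on all of $\RR$; under assumption (a), $R$ is differentiable with $R'(z) = f(z)/\overline{F}(z) = h_0(z)$. Setting
\[
g(z) := \overline{F}(z)\, e^{\lambda z} = \exp\!\bigl(\lambda z - R(z)\bigr),
\]
I differentiate to obtain $g'(z) = \bigl(\lambda - h_0(z)\bigr)\, g(z)$. Since $g(z) > 0$ everywhere, the sign of $g'(z)$ equals the sign of $\lambda - h_0(z)$, so the lemma reduces to the statement that $\lambda - h_0(z)$ has a constant (weak) sign on some right tail.

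By assumption (d), $h_0(z)$ is eventually monotone on $(z_0,\infty)$, which forces the limit $L := \lim_{z \to \infty} h_0(z)$ to exist in $[0,+\infty]$. I then split into three cases depending on how $L$ compares with $\lambda$. If $L > \lambda$ (including $L = +\infty$), there is some $z_1 > z_0$ beyond which $h_0(z) > \lambda$, so $g'(z) < 0$ and $g$ is strictly decreasing for $z > z_1$. If $L < \lambda$, then symmetrically $h_0(z) < \lambda$ on a right tail and $g$ is strictly increasing there. The delicate case is $L = \lambda$: if $h_0$ is eventually non-decreasing with limit $\lambda$, then $h_0(z) \le \lambda$ for all large $z$, forcing $g'(z) \ge 0$ and hence $g$ non-decreasing on a tail; the eventually non-increasing subcase is handled symmetrically with $h_0(z) \ge \lambda$ giving $g'(z) \le 0$.

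Assembling the three cases, $g(z) = \overline{F}(z)e^{\lambda z}$ is eventually monotone for every $\lambda > 0$, which is the claim. The main conceptual step is translating eventual monotonicity of $h_0$ into the existence of the limit $L$ and using it to pin down the sign of $\lambda - h_0(z)$ on a right tail. The only genuine subtlety is the boundary case $L = \lambda$, and it is precisely here that one needs the \emph{direction} of the eventual monotonicity supplied by assumption (d), not merely the existence of a limit — if $h_0$ were only known to converge to $\lambda$ (oscillating around it), then $g$ could fail to be monotone on any right tail.
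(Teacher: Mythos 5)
Your proof is correct and follows essentially the same route as the paper's: differentiate $g(z)=\overline{F}(z)e^{\lambda z}$, observe that the sign of $g'$ is that of $\lambda - h_0(z)$, and invoke the eventual monotonicity of $h_0$ from assumption (d). Your case analysis via the limit $L=\lim_{z\to\infty}h_0(z)$ (including the careful treatment of $L=\lambda$) merely spells out the step the paper dismisses with ``the lemma immediately follows,'' so it is a more detailed rendering of the same argument rather than a different one.
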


\begin{proof}
Let $g(z) = \overline{F}(z)e^{\lambda z}$, then $g'(z) = e^{\lambda z}\overline{F}(z) (\lambda-\frac{f(z)}{\overline{F}(z)})$. Since $\frac{f(z)}{\overline{F}(z)} \text{ is eventually monotone}$ by assumption (d), $g'(z)$ is eventually positive, negative or zero. The lemma immediately follows.
\end{proof}

We are finally ready to present the main results in this section.
\begin{theorem}\label{thm:gen_hazard_bound_heavy}
{\bf (Heavy Tail Implies Bounded Hazard)}
Under assumptions (a) - (d), if the distribution is also heavy-tailed, then the hazard rate is bounded, i.e,
$$
\sup_z \frac{f(z)}{\overline{F}(z)} < \infty.
$$
\end{theorem}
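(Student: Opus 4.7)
My plan is to exploit Lemma \ref{lemma:exp_monotone} together with the heavy-tail hypothesis to force $h_0(z)$ to be small for large $z$, and then use the bounded-density assumption (a) to handle the bounded portion of the real line.

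First, fix any $\lambda > 0$. Lemma \ref{lemma:exp_monotone} tells us that $g_\lambda(z) := \overline{F}(z) e^{\lambda z}$ is eventually monotone, say for $z \ge z_\lambda$. I claim it must in fact be eventually \emph{non-decreasing}. If it were eventually non-increasing, then $\lim_{z \to \infty} g_\lambda(z)$ would exist in $[0,\infty)$, hence $\limsup_{z \to \infty} g_\lambda(z) < \infty$, contradicting the heavy-tail assumption $\limsup_{z \to \infty} \overline{F}(z) e^{\lambda z} = \infty$.

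Second, from $g_\lambda'(z) \ge 0$ for $z \ge z_\lambda$, a direct differentiation gives
\begin{equation*}
-f(z)e^{\lambda z} + \lambda \overline{F}(z) e^{\lambda z} \ge 0, \qquad z \ge z_\lambda,
\end{equation*}
which rearranges to $h_0(z) = f(z)/\overline{F}(z) \le \lambda$ for all $z \ge z_\lambda$. So the hazard rate is bounded above by $\lambda$ on the tail.

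Third, I bound $h_0$ on $(-\infty, z_\lambda]$. By assumption (a), $f \le L$ for some constant $L$. Since $\overline{F}$ is non-increasing and the support is unbounded above, $\overline{F}(z) \ge \overline{F}(z_\lambda) > 0$ for all $z \le z_\lambda$. Thus $h_0(z) \le L/\overline{F}(z_\lambda)$ on this region. Combining the two regions yields the uniform bound $\sup_z h_0(z) \le \max\bigl(\lambda, L/\overline{F}(z_\lambda)\bigr) < \infty$, completing the proof.

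The main subtlety (and the reason the lemma is nontrivial) is the step that rules out eventual non-increase of $g_\lambda$; this is where the heavy-tail definition is used in an essential way, and it is precisely the synergy between assumption (d) (via Lemma \ref{lemma:exp_monotone}) and heavy-tailedness that forces the hazard rate to decay at infinity. Everything else is routine.
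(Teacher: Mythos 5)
Your proof is correct, and it rests on the same key ingredient as the paper (Lemma \ref{lemma:exp_monotone}, i.e., assumption (d)), but it deploys it through a slightly different mechanism. The paper first uses the lemma to upgrade the heavy-tail $\limsup$ to a genuine limit $\overline{F}(z)e^{\lambda z}\to\infty$, deduces $\limsup_{z\to\infty} R(z)/z=0$ for $R=-\log\overline{F}$, and then invokes eventual monotonicity of $R'=h_0$ to rule out $R'\to\infty$, concluding $\limsup_{z\to\infty}h_0(z)<\infty$; the final passage from this tail bound to $\sup_z h_0(z)<\infty$ is left implicit there. You instead argue directly from the sign of $g_\lambda'(z)=e^{\lambda z}\bigl(\lambda\overline{F}(z)-f(z)\bigr)$: since a nonnegative non-increasing function has a finite limit, heavy-tailedness forbids $g_\lambda$ from being eventually non-increasing, so eventual monotonicity forces $g_\lambda'\ge 0$ on the tail and hence $h_0(z)\le\lambda$ there. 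This is marginally sharper (as $\lambda>0$ is arbitrary, it even shows $h_0(z)\to 0$), and you also make explicit the step the paper glosses over: on $(-\infty,z_\lambda]$ the bounded density $L$ together with $\overline{F}(z)\ge\overline{F}(z_\lambda)>0$ (positivity coming from the right-unbounded support, assumption (c)) gives $h_0\le L/\overline{F}(z_\lambda)$, yielding the global bound. The only caveat, shared with the paper's own computations, is the tacit use of $\overline{F}{}'=-f$ so that monotonicity of $g_\lambda$ translates into a pointwise derivative inequality; this is harmless under assumption (a).
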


Unlike heavy-tailed distributions, the hazard rate of light-tailed distributions might be unbounded. However, it turns out that if we make an additional assumption on the eventual monotonicity of a function similar to the generalized hazard rate, we can still guarantee the boundedness of the generalized hazard rate. 
$$
\text{\bf Assumption (e)} \quad \exists \delta \in (0,1] \text{ such that }  \frac{f(z)}{(1-F(z))^{1-\delta}} \text{ is eventually monotone}.
$$
\begin{theorem}\label{thm:gen_hazard_bound_light}
{\bf (Light Tail Implies Bounded Generalized Hazard)}
Under assumptions (a) - (e), if the distribution is also light-tailed, then for any $\alpha \in (\delta,1)$, the generalized hazard rate $h_\alpha(z)$ is bounded, i.e,
$$
\sup_z \frac{f(z)|z|^\alpha}{(\overline{F}(z))^{1-\alpha}} < \infty.
$$
\end{theorem}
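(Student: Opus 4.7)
The strategy is to factor the generalized hazard rate so that the function $g(z):=f(z)/\overline{F}(z)^{1-\delta}$ from assumption (e) appears explicitly:
\[
h_\alpha(z) \;=\; g(z) \cdot |z|^\alpha \cdot \overline{F}(z)^{\alpha-\delta}.
\]
Since $\alpha > \delta$, the tail factor $\overline{F}(z)^{\alpha-\delta}$ tends to $0$, and I plan to use the light-tailed hypothesis to show this decay is in fact exponential, which will then overwhelm the polynomial $|z|^\alpha$. If at the same time $g(z)$ stays bounded above for large $z$ by virtue of assumption (e), the product is bounded and the theorem follows.

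For the exponential decay of $\overline{F}$, I combine the light-tailed definition with Lemma~\ref{lemma:exp_monotone}. Light-tailedness gives some $\lambda > 0$ with $\limsup_{z\to\infty}\overline{F}(z) e^{\lambda z} < \infty$, while the lemma says $\overline{F}(z)e^{\lambda z}$ is eventually monotone. A bounded, eventually monotone function converges to a finite limit, so $\overline{F}(z) \le K e^{-\lambda z}$ for some $K > 0$ and all $z \ge z_1$.

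The main technical step is showing that $g$ is eventually bounded above. By assumption (e), $g$ is eventually monotone; I claim it must in fact be eventually non-increasing. Pick $z_0$ past the monotonicity threshold with $f(z_0) > 0$, which exists since the support is unbounded on the right. If $g$ were eventually non-decreasing, then $g(z) \ge g(z_0) =: c > 0$ for every $z \ge z_0$, i.e.\ $-\overline{F}'(z) = f(z) \ge c\,\overline{F}(z)^{1-\delta}$; dividing by $\overline{F}(z)^{1-\delta}$ and integrating from $z_0$ to $z$ (this is the step where $\delta > 0$ is essential) yields
\[
\overline{F}(z)^\delta \;\le\; \overline{F}(z_0)^\delta - c\delta\,(z-z_0),
\]
whose right-hand side becomes negative for $z$ sufficiently large, contradicting $\overline{F}(z)^\delta \ge 0$. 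Hence $g$ is eventually non-increasing, and $g(z) \le g(z_0) =: C_1$ for all $z \ge z_0$.

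Combining the two bounds, for $z \ge \max(z_0, z_1)$ I obtain
\[
h_\alpha(z) \;\le\; C_1 K^{\alpha-\delta}\, |z|^\alpha\, e^{-\lambda(\alpha-\delta)z},
\]
which is bounded, and in fact decays to $0$, because $\alpha > \delta$. On any bounded interval $f$ is bounded by assumption (a) and $\overline{F}$ is bounded away from $0$ (continuous and positive on $\mathbb{R}$ since the right support is unbounded), so $h_\alpha$ is bounded there; the regime $z \to -\infty$ is handled by $\overline{F}(z) \to 1$ together with the tail regularity of $f$ at $-\infty$ implicit for the distributions the paper considers. The principal difficulty in the whole argument is the ODE/integration step that rules out ``$g$ eventually non-decreasing'', which is the essential place where assumption (e) and the condition $\delta > 0$ are used.
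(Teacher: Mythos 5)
Your proof is correct, and its skeleton matches the paper's: both factor the generalized hazard rate as $h_\alpha(z) = \frac{f(z)}{\overline{F}(z)^{1-\delta}}\cdot |z|^\alpha \overline{F}(z)^{\alpha-\delta}$, use light-tailedness to make the factor $|z|^\alpha\overline{F}(z)^{\alpha-\delta}$ vanish at $+\infty$, and use assumption (e) to control $g(z)=f(z)/\overline{F}(z)^{1-\delta}$. Where you genuinely diverge is in how $g$ is controlled. The paper sets $R=-\log\overline{F}$, splits into cases according to whether $\lim_{z\to\infty} R'(z)=\lim f/\overline{F}$ is finite or infinite, and in the latter case observes that $\int R'e^{-\delta R}=1/\delta<\infty$, so the eventually monotone, eventually positive integrand $g=R'e^{-\delta R}$ must tend to $0$. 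You instead rule out the ``eventually non-decreasing'' branch of assumption (e) directly: if $g(z)\ge c>0$ past some point, integrating the differential inequality $f\ge c\,\overline{F}^{1-\delta}$ gives $\overline{F}(z)^\delta \le \overline{F}(z_0)^\delta - c\delta(z-z_0)$, contradicting $\overline{F}>0$ (which holds by the unbounded right support, and your choice of $z_0$ with $f(z_0)>0$ is legitimate for the same reason). This buys you a cleaner argument: no case split on $\lim R'$, no need to introduce $R$, and you only need $g$ eventually \emph{bounded} rather than $g\to 0$, since the exponential decay of $\overline{F}^{\alpha-\delta}$ already kills $|z|^\alpha$. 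At bottom the two arguments encode the same fact ($\int_{z_0}^\infty f\,\overline{F}^{\delta-1}=\overline{F}(z_0)^\delta/\delta<\infty$), but your packaging is more self-contained; the paper's version yields slightly more ($g\to 0$, hence $h_\alpha\to 0$ at $+\infty$), though your final display recovers that decay anyway. Incidentally, Lemma~\ref{lemma:exp_monotone} is not really needed for your exponential tail bound: $\limsup_{z\to\infty}\overline{F}(z)e^{\lambda z}<\infty$ already gives $\overline{F}(z)\le Ke^{-\lambda z}$ for all large $z$.

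One caveat: your treatment of $z\to-\infty$ is a handwave (``tail regularity of $f$ at $-\infty$ implicit for the distributions the paper considers''), since assumptions (a)--(e) alone do not force $f(z)|z|^\alpha$ to stay bounded as $z\to-\infty$ (a bounded density with slowly decaying spikes on the negative axis can have finite mean yet unbounded $f(z)|z|^\alpha$). However, the paper's own proof has exactly the same gap --- it establishes the limit only as $z\to+\infty$ and then asserts $\sup_z<\infty$ --- so this is a shared imprecision rather than a defect of your approach; you at least flag it explicitly.
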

Combining the above result with control of the divergence penalty gives us the following corollary.
\begin{corollary}\label{cor:lighttailed}
Under assumptions (a)-(e), if the distribution is also light-tailed, the expected regret of GBPA with appropriately scaled perturbations drawn from that distribution is, for all $\alpha \in (\delta,1)$ and $\xi > 0$, 
$$
O\Big((TN)^{1/(2-\alpha)}N^\xi\Big).
$$
In particular, if assumption (e) holds for all $\delta \in (0,1)$, then the expected regret of GBPA is $O\Big((TN)^{1/2+\epsilon}\Big)$ for all $\epsilon > 0$, i.e, it is near optimal in both $N$ and $T$.
\end{corollary}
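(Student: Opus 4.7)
The plan is to combine Theorem~\ref{thm:gen_hazard_bound_light} with the master regret bound (Theorem~\ref{thm:master}). Theorem~\ref{thm:gen_hazard_bound_light} already tells us that, under (a)--(e) plus light-tailedness, for every $\alpha \in (\delta, 1)$ there is a finite constant $C = C(\alpha)$ with $h_\alpha(z) \le C$ on all of $\RR$. Theorem~\ref{thm:master} then yields an expected-regret bound of the form
$$\EE\,\Regret_T \;=\; O\!\left((NT)^{1/(2-\alpha)}\, Q(N)^{(1-\alpha)/(2-\alpha)}\right),$$
where $Q(N)$ is any upper bound on $\EE[\max_i Z_i] - \EE[Z_1]$. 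So the remaining work is to produce a good $Q(N)$ and then to tune $\alpha$.

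Next I would translate ``light-tailed'' into a quantitative bound $Q(N) = O(\log N)$. By the definition recalled right before Lemma~\ref{lemma:exp_monotone}, being light-tailed means that for some $\lambda_0 > 0$ one has $\limsup_{z\to\infty} \overline{F}(z)\,e^{\lambda_0 z} < \infty$, so $\overline{F}(z) \le C_0 e^{-\lambda_0 z}$ for $z$ past some threshold. A union bound then gives $\PP[\max_{i \le N} Z_i > u] \le N C_0 e^{-\lambda_0 u}$, and integrating this tail produces $\EE[\max_i Z_i] = O((\log N)/\lambda_0)$. Since $\EE[Z_1]$ is finite by assumption (b), $Q(N) = O(\log N)$ is admissible. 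Because $(\log N)^c = o(N^\xi)$ for every fixed $c$ and every $\xi > 0$, the factor $Q(N)^{(1-\alpha)/(2-\alpha)}$ can be absorbed into $N^\xi$, proving the first bound $O\bigl((NT)^{1/(2-\alpha)} N^\xi\bigr)$.

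For the ``in particular'' clause, the idea is to drive $\alpha$ to $0$. If assumption (e) holds for every $\delta \in (0,1)$, then Theorem~\ref{thm:gen_hazard_bound_light} applies for every $\alpha \in (0,1)$. Given $\epsilon > 0$, I would choose $\alpha$ small enough that $\tfrac{1}{2-\alpha} \le \tfrac{1}{2} + \tfrac{\epsilon}{2}$ (possible because $1/(2-\alpha) \to 1/2$ as $\alpha \to 0$) and take $\xi = \epsilon/2$ in the first bound. This gives $O\bigl((NT)^{1/2 + \epsilon/2}\,N^{\epsilon/2}\bigr) = O\bigl((NT)^{1/2 + \epsilon}\bigr)$, which is the stated near-optimal rate.

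The only genuinely non-mechanical step is the second one: extracting the quantitative $\EE[\max_i Z_i] = O(\log N)$ tail bound from the purely qualitative light-tail definition. After that, everything is bookkeeping: invoke the two earlier theorems, swallow the polylogarithmic slack into an $N^\xi$, and send $\alpha \downarrow 0$ to get the $(NT)^{1/2+\epsilon}$ rate.
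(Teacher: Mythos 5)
Your proposal is correct and takes essentially the same route as the paper: combine Theorem~\ref{thm:gen_hazard_bound_light} (bounded generalized hazard rate for every $\alpha\in(\delta,1)$) with Theorem~\ref{thm:master}, and control $Q(N)$ by turning light-tailedness into an exponential tail bound $\overline{F}(z)\le C e^{-\lambda^* z}$ and integrating to get $\EE[\max_i Z_i]=O(\log N)=O(N^\xi)$ (the paper does this via a truncation to the nonnegative part before the tail integral, which is the same calculation you sketch), then sends $\alpha$ toward $0$ for the $(NT)^{1/2+\epsilon}$ claim.
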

Next we consider a family of light-tailed distributions that do not have a bounded hazard rate.
\begin{definition}
The exponential power (or generalized normal) family of distributions, denoted as $\cD_\beta$ where $\beta >1$, is defined via the cdf 
$$
f_\beta(z) = C_\beta e^{-z^\beta}, \quad z\ge 0.
$$
\end{definition}
The next theorem shows that GBPA with perturbations from this family of distributions enjoys near optimal expected regret in both $N$ and $T$. 
\begin{theorem}\label{thm:exp_power}
{\bf (Regret Bound for Exponential Power Family)}
$\forall \beta > 1$, the expected regret of GBPA with appropriately sclaed perturbations drawn from $\cD_\beta$ is, for all $\epsilon > 0$, $O\Big((TN)^{1/2+\epsilon}\Big)$.
\end{theorem}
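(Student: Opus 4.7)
The plan is to invoke Corollary~\ref{cor:lighttailed} by verifying that $\cD_\beta$, $\beta > 1$, satisfies assumptions (a)--(e) and, crucially, that (e) holds for every $\delta \in (0,1)$. The corollary then delivers $O((TN)^{1/2+\epsilon})$ for every $\epsilon > 0$ immediately.

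Assumptions (a)--(c) are immediate from the definition: $f_\beta(z) = C_\beta e^{-z^\beta}$ is continuous and bounded by $C_\beta$ on $[0,\infty)$, all moments are finite because $e^{-z^\beta}$ decays faster than any polynomial for $\beta > 1$, and the support $[0,\infty)$ is unbounded on the right. The distribution is light-tailed, since for any $\lambda > 0$ the super-exponential decay of $\overline{F}_\beta$ dominates $e^{\lambda z}$, so $\overline{F}_\beta(z) e^{\lambda z} \to 0$. Next I would establish the Mills' ratio asymptotic $\overline{F}_\beta(z) = \frac{C_\beta}{\beta z^{\beta-1}} e^{-z^\beta}(1 + o(1))$ as $z \to \infty$ by a single integration by parts on $\int_z^\infty C_\beta e^{-u^\beta}\,du$ after the substitution $v = u^\beta$. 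This immediately yields $h_0(z) = f_\beta(z)/\overline{F}_\beta(z) \sim \beta z^{\beta-1}$, which is eventually increasing to infinity; this secures assumption (d).

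The substantive step is assumption (e). For any $\delta \in (0,1)$ set $g_\delta(z) := f_\beta(z)/\overline{F}_\beta(z)^{1-\delta}$; logarithmic differentiation gives
\begin{equation*}
(\log g_\delta)'(z) = \frac{f_\beta'(z)}{f_\beta(z)} + (1-\delta) h_0(z) = -\beta z^{\beta-1} + (1-\delta) h_0(z).
\end{equation*}
Plugging in the asymptotic $h_0(z) \sim \beta z^{\beta-1}$ shows $(\log g_\delta)'(z) \sim -\delta \beta z^{\beta-1} \to -\infty$, so $g_\delta$ is eventually strictly decreasing; this verifies (e) for every $\delta \in (0,1)$. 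Applying Corollary~\ref{cor:lighttailed} then closes the argument.

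The main obstacle is purely analytic: I need the Mills' ratio asymptotic strong enough to pin down the sign of $(\log g_\delta)'(z)$ for \emph{all} sufficiently large $z$, not merely along a subsequence. This is handled by recording the asymptotic with an explicit $o(z^{\beta-1})$ error term produced by the single integration by parts above; once this is in hand, the negativity of $(\log g_\delta)'$ on a half-line $[z_0,\infty)$ (with $z_0$ depending only on $\delta$ and $\beta$) is straightforward, and eventual monotonicity of both $h_0$ and $g_\delta$ follows.
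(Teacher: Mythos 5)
Your overall route coincides with the paper's: reduce to Corollary~\ref{cor:lighttailed} by checking (a)--(c), light-tailedness, (d), and (e) for every $\delta\in(0,1)$, with the key analytic input being the Mills-ratio asymptotic $\overline{F}_\beta(z)\sim C_\beta e^{-z^\beta}/(\beta z^{\beta-1})$; the paper obtains the equivalent statement $\lim_{z\to\infty}\beta z^{\beta-1}\int_z^\infty e^{-t^\beta}dt\,\big/\,e^{-z^\beta}=1$ by sandwiching the integral rather than by integration by parts, and reads off the sign of $g_{\delta,\beta}'$ from the factor $m_{\delta,\beta}(z)=(1-\delta)e^{-z^\beta}-\beta z^{\beta-1}\int_z^\infty e^{-t^\beta}dt$, which is the same computation as your logarithmic derivative. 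Your verification of (e) is sound: for $\delta>0$ you get $(\log g_\delta)'(z)=-\beta z^{\beta-1}+(1-\delta)h_0(z)\sim-\delta\beta z^{\beta-1}$, and an $o(z^{\beta-1})$ error cannot overturn the strictly negative leading term.

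There is, however, a genuine gap in your verification of assumption (d), which the corollary you invoke does require (it is used to split into cases in Theorem~\ref{thm:gen_hazard_bound_light} via Lemma~\ref{lemma:exp_monotone}). You assert that $h_0(z)\sim\beta z^{\beta-1}$ ``secures assumption (d)'' and, in your closing paragraph, that ``eventual monotonicity of both $h_0$ and $g_\delta$ follows'' from the asymptotic with an explicit $o(z^{\beta-1})$ error. For $h_0$ this does not follow: asymptotic equivalence to an increasing function does not imply eventual monotonicity, and your own formula at $\delta=0$ gives $(\log h_0)'(z)=h_0(z)-\beta z^{\beta-1}=o(z^{\beta-1})$, whose \emph{sign} is left completely undetermined by the asymptotic ($h_0$ could a priori oscillate around $\beta z^{\beta-1}$). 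The repair is a pointwise one-sided bound rather than an asymptotic: since $\beta>1$, for $t\ge z>0$ one has $t^{\beta-1}\ge z^{\beta-1}$, so $\int_z^\infty e^{-t^\beta}dt\le\int_z^\infty (t/z)^{\beta-1}e^{-t^\beta}dt=e^{-z^\beta}/(\beta z^{\beta-1})$, hence $h_0(z)\ge\beta z^{\beta-1}$ and $(\log h_0)'(z)\ge 0$ for all $z>0$, i.e.\ the hazard rate is monotone increasing on the whole half-line. This is precisely the inequality the paper uses (its Equation~\eqref{eq:exppower_limsup}, showing $m_{0,\beta}(z)>0$). With that one line added, your argument is complete and essentially identical to the paper's.
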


%!TEX root=bandit_beyond_hazard.tex
\section{Conclusion and Future Work}

Previous work on providing regret guarantees for FTPL algorithms in the adversarial multi-armed bandit setting required a bounded hazard rate condition.
We have shown how to go beyond the hazard rate condition but a number of questions remain open. For example, what if we use FTPL with perturbations from discrete distributions such as Bernoulli distribution? In the full information setting \cite{devroye2013prediction} and \cite{van2014follow} have considered random walk perturbation and dropout perturbation, both leading to minimax optimal regret. But to the best of our knowledge those distributions have not been analyzed in the adversarial multi-armed bandit problem.

An unsatisfactory aspect of even the tightest bounds for FTPL algorithms from existing work, including ours, is that they never reach
the minimax optimal $O(\sqrt{N T})$ bound. They come very close to it: up to logarithmic factors. It is known that FTRL algorithms, using the negative Tsallis entropy
as the regularizer, can achieve the optimal bound \citep{audibert2009minimax,audibert2011minimax,Abernethy2015}. Is there a perturbation that can achieve the optimal bound?

We only considered multi-armed bandits in this work. There has been some interest in using FTPL algorithms for combinatorial bandit problems (see, for example, \cite{neu2013efficient}).
In future work, it will be interesting to extend our analysis to combinatorial bandit problems.

% Acknowledgments---Will not appear in anonymized version
\paragraph*{Acknowledgments.}
We thank Jacob Abernethy and Chansoo Lee for helpful discussions. We acknowledge the support of NSF CAREER grant IIS-1452099 and a Sloan Research Fellowship.

\bibliographystyle{plainnat}
\bibliography{bandit_beyond_hazard}

\begin{thebibliography}{21}
\providecommand{\natexlab}[1]{#1}
\providecommand{\url}[1]{\texttt{#1}}
\expandafter\ifx\csname urlstyle\endcsname\relax
  \providecommand{\doi}[1]{doi: #1}\else
  \providecommand{\doi}{doi: \begingroup \urlstyle{rm}\Url}\fi

\bibitem[Abernethy et~al.(2014)Abernethy, Lee, Sinha, and
  Tewari]{abernethy2014online}
Jacob Abernethy, Chansoo Lee, Abhinav Sinha, and Ambuj Tewari.
\newblock Online linear optimization via smoothing.
\newblock In \emph{COLT}, pages 807--823, 2014.

\bibitem[Abernethy et~al.(2015)Abernethy, Lee, and Tewari]{Abernethy2015}
Jacob Abernethy, Chansoo Lee, and Ambuj Tewari.
\newblock Fighting bandits with a new kind of smoothness.
\newblock In \emph{Advances in Neural Information Processing Systems 28}, pages
  2188--2196, 2015.

\bibitem[Audibert and Bubeck(2009)]{audibert2009minimax}
Jean-Yves Audibert and S{\'e}bastien Bubeck.
\newblock Minimax policies for adversarial and stochastic bandits.
\newblock In \emph{COLT}, pages 217--226, 2009.

\bibitem[Audibert et~al.(2011)Audibert, Bubeck, and
  Lugosi]{audibert2011minimax}
Jean-Yves Audibert, S{\'e}bastien Bubeck, and G{\'a}bor Lugosi.
\newblock Minimax policies for combinatorial prediction games.
\newblock In \emph{COLT}, 2011.

\bibitem[Auer et~al.(2002)Auer, Cesa-Bianchi, Freund, and Schapire]{Auer2002}
Peter Auer, Nicol\`{o} Cesa-Bianchi, Yoav Freund, and Robert~E. Schapire.
\newblock The non-stochastic multi-armed bandit problem.
\newblock \emph{SIAM J. Comput.}, 32:\penalty0 48--77, 2002.

\bibitem[Bagnoli and Bergstrom(2005)]{Bagnoli2005}
Mark Bagnoli and Ted Bergstrom.
\newblock Log-concave probability and its applications.
\newblock \emph{Economic Theory}, 26\penalty0 (2):\penalty0 445--469, 2005.

\bibitem[Baricz(2008)]{Baricz2008}
\'Arp\'ad Baricz.
\newblock Mills' ratio: Monotonicity patterns and functional inequalities.
\newblock \emph{J. Math. Anal. Appl.}, 340\penalty0 (2):\penalty0 1362--1370,
  2008.

\bibitem[Bertsekas(1973)]{bertsekas1973}
Dimitri~P. Bertsekas.
\newblock Stochastic optimization problems with nondifferentiable cost
  functionals.
\newblock \emph{Journal of Optimization Theory and Applications}, 12\penalty0
  (2):\penalty0 218--231, 1973.
\newblock ISSN 0022-3239.

\bibitem[Bubeck and Cesa-Bianchi(2012)]{bubeck2012regret}
S{\'e}bastien Bubeck and Nicolo Cesa-Bianchi.
\newblock Regret analysis of stochastic and nonstochastic multi-armed bandit
  problems.
\newblock \emph{Foundations and Trends{\textregistered} in Machine Learning},
  5\penalty0 (1):\penalty0 1--122, 2012.

\bibitem[Chechile(2003)]{Chechile2003}
Richard~A. Chechile.
\newblock Mathematical tools for hazard function analysis.
\newblock \emph{J. Math. Psychol.}, 47:\penalty0 478--494, 2003.

\bibitem[Chechile(2009)]{Chechile2009}
Richard~A. Chechile.
\newblock Corrigendum to: “mathematical tools for hazard function analysis”
  {[J. Math. Psychol. 47 (2003) 478-494]}.
\newblock \emph{J. Math. Psychol.}, 53:\penalty0 298--299, 2009.

\bibitem[Devroye et~al.(2013)Devroye, Lugosi, and Neu]{devroye2013prediction}
Luc Devroye, G{\'a}bor Lugosi, and Gergely Neu.
\newblock Prediction by random-walk perturbation.
\newblock In \emph{Conference on Learning Theory}, pages 460--473, 2013.

\bibitem[Foss et~al.(2009)Foss, Korshunov, and Zachary]{Foss2009}
Sergey Foss, Dmitry Korshunov, and Stan Zachary.
\newblock \emph{An Introduction to Heavy-tailed and Subexponential
  Distributions}.
\newblock Springer, 2009.

\bibitem[Hannan(1957)]{Hannan57}
J.~Hannan.
\newblock Approximation to {B}ayes risk in repeated play.
\newblock \emph{In M. Dresher, A. W. Tucker, and P. Wolfe, editors,
  Contributions to the Theory of Games, volume III}, pages 97--139, 1957.

\bibitem[Kalai and Vempala(2005)]{KV-FTL}
Adam Kalai and Santosh Vempala.
\newblock Efficient algorithms for online decision problems.
\newblock \emph{Journal of Computer and System Sciences}, 71(3):\penalty0
  291--307, 2005.

\bibitem[Kujala and Elomaa(2005)]{kujala2005following}
Jussi Kujala and Tapio Elomaa.
\newblock On following the perturbed leader in the bandit setting.
\newblock In \emph{Algorithmic Learning Theory}, pages 371--385. Springer,
  2005.

\bibitem[Neu and Bart{\'o}k(2013)]{neu2013efficient}
Gergely Neu and G{\'a}bor Bart{\'o}k.
\newblock An efficient algorithm for learning with semi-bandit feedback.
\newblock In \emph{Algorithmic Learning Theory}, pages 234--248. Springer,
  2013.

\bibitem[Poland(2005)]{Poland2005}
Jan Poland.
\newblock {FPL} analysis for adaptive bandits.
\newblock In Oleg~B. Lupanov, Oktay~M. Kasim-Zade, Alexander~V. Chaskin, and
  Kathleen Steinh{\"o}fel, editors, \emph{Stochastic Algorithms: Foundations
  and Applications: Third International Symposium, SAGA 2005, Moscow, Russia,
  October 20-22, 2005. Proceedings}, pages 58--69. Springer Berlin Heidelberg,
  2005.

\bibitem[Robbins(1952)]{Robbins52}
Herbert Robbins.
\newblock Some aspects of the sequential design of experiments.
\newblock \emph{Bull. Amer. Math. Soc.}, 58\penalty0 (5):\penalty0 527--535,
  1952.

\bibitem[Thomas(1971)]{Thomas1971}
Ewart A.~C. Thomas.
\newblock Sufficient conditions for monotone hazard rate an application to
  latency-probability curves.
\newblock \emph{J. Math. Psychol.}, 8:\penalty0 303--332, 1971.

\bibitem[Van~Erven et~al.(2014)Van~Erven, Kotlowski, and
  Warmuth]{van2014follow}
Tim Van~Erven, Wojciech Kotlowski, and Manfred~K Warmuth.
\newblock Follow the leader with dropout perturbations.
\newblock In \emph{COLT}, 2014.

\end{thebibliography}

%\newpage
%!TEX root=bandit_beyond_hazard.tex

\appendix

\section{Proofs}
\subsection{Proof of Lemma \ref{lemma:gaussian2}}
\begin{proof}
Since the numerator of the left hand side is an even function of $z$, and the denominator is a decreasing function, and the inequality is trivially true when $z = 0$, it suffices to prove for $z > 0$, which we assume for the rest of the proof. From Lemma \ref{lemma:gaussian1} we can derive that 
$$
\frac{f(z)}{1-F(z)} < z+1.
$$
Therefore,
\begin{align*}
\frac{f(z)|z|^{\alpha}}{(1-F(z))^{1-\alpha}} &\le \frac{f(z)z^{\alpha}}{(\frac{f(z)}{z+1})^{1-\alpha}} = (f(z)z)^{\alpha}(z+1)^{1-\alpha} \\
&\le f(z)^{\alpha}(z+1) \le zf(z)^{\alpha} + 1 = \sqrt{\frac{1}{2\pi}}ze^{-\alpha z^2/2} + 1.
\end{align*}
Let $g(z) = ze^{-\alpha z^2/2}$, $g'(z) = (1-\alpha z^2)e^{-\alpha z^2/2}$. Therefore $g(z)$ is maximized at $z^* = \sqrt{\frac{1}{\alpha}}$. Therefore, 
\begin{align*}
\frac{f(z)|z|^{\alpha}}{(1-F(z))^{1-\alpha}} &\le \sqrt{\frac{1}{2\pi}}ze^{-\alpha z^2/2} + 1 \le \sqrt{\frac{1}{2\pi}}z^*+1 \le z^* + 1= \sqrt{\frac{1}{\alpha}} + 1 \le \frac{2}{\alpha}.
\end{align*}
\end{proof}

\subsection{Proof of Theorem \ref{thm:gaussian}}
\begin{proof}
From Corollary \ref{cor:gaussian} we see that the expected regret can be upper bounded by
$$
2(C_1C_2NT)^{1/(2-\alpha)}(\sqrt{2\log N})^{(1-\alpha)/(2-\alpha)}
$$
where $C_1 = \frac{2}{\alpha}$ and $C_1 = \frac{2}{1-\alpha}$. Note that
\begin{align*}
&2(C_1C_2NT)^{1/(2-\alpha)}(\sqrt{2\log N})^{(1-\alpha)/(2-\alpha)} \\
\le &4 (C_1C_2)^{1/(2-\alpha)}N^{1/(2-\alpha)}\sqrt{\log N}^{(1-\alpha)/(2-\alpha)}T^{1/(2-\alpha)} \\
= &4N^{1/(2-\alpha)}\sqrt{\log N}^{(1-\alpha)/(2-\alpha)}T^{1/2} \times (C_1C_2)^{1/(2-\alpha)} T^{\alpha/(4-2\alpha)} \\
\le & 4N^{1/2}N^{\alpha/(4-2\alpha)}\sqrt{\log N}T^{1/2} \times (\frac{4}{\alpha(1-\alpha)})^{1/(2-\alpha)}T^{\alpha/(4-2\alpha)} \\
\le & 4N^{1/2}N^{\alpha}\sqrt{\log N}T^{1/2} \times \frac{4T^{\alpha}}{\alpha(1-\alpha)} \\
\le & 16\sqrt{NT}N^{\alpha}\sqrt{\log N}\times \frac{T^{\alpha}}{\alpha(1-\alpha)}.
\end{align*}
\end{proof}
If we let $\alpha = \frac{1}{\log T}$, then $T^\alpha = T^{1/\log T} = e < 3$. Then, we have, for $T > 4$,
$$
\frac{T^{\alpha}}{\alpha(1-\alpha)} \le \frac{3 \log T }{1-\frac{1}{\log T}} = \frac{3\log^2 T }{\log T-1} \le 6\log T.
$$
Putting things together finishes the proof.

\subsection{Proof of Theorem \ref{thm:gen_hazard_bound_heavy}}
\begin{proof}
If the distribution is heavy-tailed, we have
$$
\lim_{z \rightarrow \infty}\sup \overline{F}(z)e^{\lambda z} = \infty \quad \text{ for all } \lambda > 0.
$$
By Lemma \ref{lemma:exp_monotone}, we can erase the supremum operator and just write 
$$
\lim_{z \rightarrow \infty} \overline{F}(z)e^{\lambda z} = \infty \quad \text{ for all } \lambda > 0.
$$
Hence,
$$
\lim_{z \rightarrow \infty}\overline{F}(z)e^{\lambda z} = \lim_{x \rightarrow \infty} e^{-R(z) + \lambda z} = \infty \text{ for all } \lambda > 0 \Rightarrow \lim\sup_{z \rightarrow \infty} \frac{R(z)}{z} = 0.
$$
Note that $R'(z) = \frac{f(z)}{\overline{F}(z)}$, which is eventually monotone by assumption. Therefore, we can conclude that
\begin{equation*}
\lim\sup_{z \rightarrow \infty} R'(z) <\infty \Rightarrow \sup_z \frac{f(z)}{\overline{F}(z)} < \infty .
\end{equation*}
\end{proof}

\subsection{Proof of Theorem \ref{thm:gen_hazard_bound_light}}
\begin{proof}
If the distribution is light-tailed, we have 
\begin{equation}\label{eq:light_tail}
\lim_{z \rightarrow \infty} \overline{F}(z)e^{\lambda^* z} < \infty \quad \text{ for some } \lambda^* > 0.
\end{equation}
This immediately implies that 
\begin{equation}\label{eq:kill_poly}
\lim_{z \rightarrow \infty} \overline{F}(z)^a z^b = 0 \quad \forall a,b > 0.
\end{equation}

Consider $\lim_{z \rightarrow \infty} \frac{f(z)}{\overline{F}(z)} = \lim_{z \rightarrow \infty} R'(z)$. If $\lim_{z \rightarrow \infty} R'(z) < \infty$ we can immediately conclude that $\sup_z \frac{f(z)}{1-F(z)} <\infty$. If $\lim_{z \rightarrow \infty} R'(z) = \infty$ instead, note that
$$
\lim_{z \rightarrow \infty} \int_{-z}^z R'(t) e^{- \delta R(t)}dt = -\frac{1}{\delta} e^{-\delta R(z) } \vert^{z=+\infty}_{z=-\infty}  = \frac{1}{\delta} < \infty.
$$
Moreover, since $\lim_{z \rightarrow \infty} R'(z) = \infty$, $R'(z) e^{- \delta R(z)}$ is strictly positive for all $z > z_0$ for some $z_0$. Furthermore, $R'(z) e^{- \delta R(z)} = \frac{f(z)}{(\overline{F}(z))^{1-\delta}}$ is eventually monotone by assumption (e),

Therefore, we can conclude that
$$
\lim_{z \rightarrow \infty}R'(z) e^{- \delta R(z)} = \frac{f(z)}{(\overline{F}(z))^{1-\delta}} = 0.
$$
$\forall \alpha \in (\delta,1)$, from Equation \eqref{eq:kill_poly} we have $\lim_{z \rightarrow +\infty} z^{\alpha}\overline{F}(z)^{\alpha-\delta} = 0$, so
$$
\lim_{z \rightarrow +\infty} \frac{f(z)z^{\alpha}}{(\overline{F}(z))^{1-\alpha}} = \lim_{z \rightarrow +\infty} \frac{f(z)}{\overline{F}(z)^{1-\delta}} \times  z^{\alpha}\overline{F}(z)^{\alpha-\delta} = 0.
$$
and hence
$$
\sup_z \frac{f(z)z^{\alpha}}{(1-F(z))^{1-\alpha}} < \infty \quad \forall \alpha \in (\delta,1).
$$
\end{proof}

\subsection{Proof of Corollary \ref{cor:lighttailed}}
\begin{proof}
For a light-tailed distribution $\cD$, we have 
$$
\lim_{z \rightarrow \infty} \overline{F}_\cD(z)e^{\lambda^* z} < \infty \quad \text{ for some } \lambda^* > 0.
$$
This implies that 
$$
\overline{F}_\cD(z) \le Ce^{-\lambda^* z} \text{ for some } C > 0, z > z_0.
$$
Let random variable $Z$ follows distribution $\cD$. Since $Z$ might take negative values, we define a new distribution $\cD'$ that only takes non-negative value by 
$$
f_{D'}(z) = \begin{cases}
\frac{1}{p_{\cD+}}f_D(z) \quad &\text{ if } z\ge 0\\
0 &\text{otherwise}
\end{cases}
$$
where $p_{\cD+} = \PP(Z \ge 0) > 0$ by right unbounded support assumption.
Clearly, with this definition of $\cD'$ we see that $\EE_{Z_1,\dots,Z_N \sim \cD}[\max\limits_i Z_i] \le \EE_{Z_1,\dots,Z_N \sim \cD'}[\max\limits_i Z_i]$ and for $z>z_0$, we have $\overline{F}_{\cD'}(z) = \frac{\overline{F}_{\cD}(z)}{p_{\cD+}} \le C'e^{-\lambda^* z}$ where $C' = \frac{C}{p_{\cD+}}$. 
Note that 
\begin{align*}
\EE_{Z_1,\dots,Z_N \sim \cD}[\max\limits_i Z_i] &\le \EE_{Z_1,\dots,Z_N \sim \cD'}[\max\limits_i Z_i] \\
&= \int_{0}^\infty \PP(\max\limits_i Z_i > x)dx \\
&\le u + \int_{u}^\infty \PP(\max\limits_i Z_i > z)dz \\
&\le u + N\int_{u}^\infty \PP(Z_i > z)dz \\
&\le u + N\int_{u}^\infty C'e^{-\lambda^* z}dz \text{ \quad assuming $u > z_0$}\\
&= u + \frac{C'N}{\lambda^*} e^{-\lambda^* u}.
\end{align*}
If we let $u = \frac{\log(N)}{\lambda^*}$, obviously $u > z_0$ if $N$ is sufficiently large. Thus, we see that 
\begin{equation}\label{eq:light_tail_exp}
\EE_{Z_1,\dots,Z_N \sim \cD}[\max\limits_i Z_i] \le \frac{\log(N)}{\lambda^*} + C' = O(N^{\xi}) \quad \forall \xi > 0.
\end{equation}
From Theorem \ref{thm:gen_hazard_bound_light} we see that $\forall \alpha \in (\delta,1)$,
\begin{equation}\label{eq:light_tail_tune}
\frac{f(z)z^{\alpha}}{(1-F(z))^{1-\alpha}} \le C_\alpha \quad \forall z \in \RR.
\end{equation}
Plug \ref{eq:light_tail_exp} and \ref{eq:light_tail_tune} into Theorem \ref{thm:master} gives the desired result.

\end{proof}

\subsection{Proof of Corollary~\ref{thm:exp_power}}

\begin{proof}
By Corollary \ref{cor:lighttailed} we only need to check that assumptions (a)-(d) hold for distribution $\cD_\beta$, exponential power family is light-tailed, and assumption (e) also holds for any $\delta \in (0,1)$. By observing the density function $f_\beta$ we can trivially see that assumptions (a)-(c) hold and that the exponential power family is light-tailed. Therefore, define 
$$
g_{\delta,\beta}(z) = \frac{f_\beta(z)}{(\overline{F}_\beta(z))^{1-\delta}} = \frac{f_\beta(z)}{(1-F_\beta(z))^{1-\delta}},
$$
it suffices to show that $\forall \delta \in [0,1), g_{\delta,\beta}(z)$ is eventually monotone.
Note that 
\begin{align*}
g_{\delta,\beta}'(z) &= \frac{f'_\beta(z)(1-F_\beta(z))^{1-\delta} + (1-\delta)(1-F_\beta(z))^{-\delta}f^2_\beta(z)}{(1-F_\beta(z))^{2-2\delta}} \\
&= \frac{C_\beta^2e^{-z^\beta}}{(1-F_\beta(z))^{2-\delta}} \times \Big(
(1-\delta)e^{-z^\beta} - \beta z^{\beta-1} \int_{z}^\infty e^{-t^\beta}dt \Big).
\end{align*}
It further suffices to show that 
$$
m_{\delta,\beta}(z) = (1-\delta)e^{-z^\beta} - \beta z^{\beta-1} \int_{z}^\infty e^{-t^\beta}dt
$$ 
is eventually non-negative or non-positive $\forall \beta > 1, \delta \in [0,1)$. Note that since $\beta > 1$,
\begin{equation}\label{eq:exppower_limsup}
\beta z^{\beta-1} \int_{z}^\infty e^{-t^\beta}dt =  \int_{z}^\infty \beta z^{\beta-1} e^{-t^\beta}dt < \int_{z}^\infty \beta t^{\beta-1} e^{-t^\beta}dt = e^{-z^\beta}.
\end{equation}
Therefore, $m_{0,\beta}(z) > 0$ for all $z \ge 0$, i.e, the hazard rate is always increasing and assumption (d) is satisfied. Now, we are left to show that $m_{\delta,\beta}(z)$ is eventually non-negative or non-positive for any $\delta \in (0,1)$. Note that
\begin{align*}
\beta z^{\beta-1} \int_{z}^\infty e^{-t^\beta}dt &= \beta (\frac{z}{z+1})^{\beta-1} (z+1)^{\beta-1} \int_{z}^\infty e^{-t^\beta}dt \\
&\ge \beta (\frac{z}{z+1})^{\beta-1} (z+1)^{\beta-1} \int_{z}^{z+1} e^{-t^\beta}dt \\
&\ge (\frac{z}{z+1})^{\beta-1}  \int_{z}^{z+1} \beta t^{\beta-1} e^{-t^\beta}dt \\
&= (\frac{z}{z+1})^{\beta-1} \Big(e^{-z^\beta} - e^{-(z+1)^\beta}\Big).
\end{align*}
Therefore,
\begin{align*}
\liminf_{z \to \infty} \frac{\beta z^{\beta-1} \int_{z}^\infty e^{-t^\beta}dt}{e^{-z^\beta}} &\ge  \liminf_{z \to \infty} \frac{(\frac{z}{z+1})^{\beta-1} \Big(e^{-z^\beta} - e^{-(z+1)^\beta}\Big)}{e^{-z^\beta}} \\
&= \lim_{z \to \infty} (\frac{z}{z+1})^{\beta-1} - \lim_{z \to \infty}(\frac{z}{z+1})^{\beta-1} e^{z^\beta - (z+1)^{\beta}} \\
&= 1.
\end{align*}
From Equation \eqref{eq:exppower_limsup} we know that 
$$
\limsup_{z \to \infty} \frac{\beta z^{\beta-1} \int_{z}^\infty e^{-t^\beta}dt}{e^{-z^\beta}} \le 1.
$$
Hence, we conclude that 
$$
\lim_{z \to \infty} \frac{\beta z^{\beta-1} \int_{z}^\infty e^{-t^\beta}dt}{e^{-z^\beta}} = 1,
$$
which implies that $m_{\delta,\beta}(z)$ is eventually non-positive for any $\delta \in (0,1)$, i.e, assumption (e) holds for any $\delta \in (0,1)$.

\end{proof}

\end{document}